\newcommand{\epsilonv}{\bm{\epsilon}}
\def\1{\bm{1}}
\def\rmI{{\mathbf{I}}}
\def\vc{{\bm{c}}}
\def\vs{{\bm{s}}}
\def\vv{{\bm{v}}}
\def\vw{{\bm{w}}}
\def\vx{{\bm{x}}}
\def\x{{\bm{x}}}
\newcommand{\Nc}{\mathcal N}
\newcommand{\vect}[1]{\bm{#1}}
\newcommand{\Iv}{\vect I}
\DeclareMathAlphabet{\mathsfit}{\encodingdefault}{\sfdefault}{m}{sl}
\SetMathAlphabet{\mathsfit}{bold}{\encodingdefault}{\sfdefault}{bx}{n}
\def\gN{{\mathcal{N}}}
\newcommand{\E}{\mathbb{E}}
\algrenewcommand\algorithmiccomment[1]{\hfill\small\textcolor{gray}{\textit{// #1}}}
\colorlet{llgray}{lightgray!40}
\title{DiffusionNFT: Online Diffusion Reinforcement with Forward Process}
\author{
\hspace{-1.2em}Kaiwen Zheng$^{1,2, *}$  \hspace{0.8em}
Huayu Chen$^{1,2,*}$  \hspace{0.8em}
Haotian Ye$^{2,3}$  \hspace{0.8em}
Haoxiang Wang$^2$   \hspace{0.8em}
Qinsheng Zhang$^2$  \vspace{-2mm} \AND\vspace{2mm} 
\hspace{3em}Kai Jiang$^1$ \hspace{1em}
Hang Su$^{1}$ \hspace{1em}
Stefano Ermon$^{3}$ \hspace{1em}
Jun Zhu$^{1, \dagger}$ \hspace{1em}
Ming-Yu Liu$^{2}$ \\
\vspace{1mm}
\hspace{35mm}
$^*$Equal Contribution \quad
$^\dagger$ Corresponding Author \quad
\\
\hspace{25mm}
$^1$Tsinghua University \quad
$^2$NVIDIA \quad
$^3$Stanford University \vspace{1mm}
\\ 
\hspace{16mm}\url{https://research.nvidia.com/labs/dir/DiffusionNFT}
\vspace{-6mm}
}
\theoremstyle{plain}
\newtheorem{theorem}{Theorem}[section]
\newtheorem{lemma}[theorem]{Lemma}
\theoremstyle{definition}
\theoremstyle{remark}
\begin{document}

\maketitle
\begin{abstract}
Online reinforcement learning (RL) has been central to post-training language models, but its extension to diffusion models remains challenging due to intractable likelihoods. Recent works discretize the reverse sampling process to enable GRPO-style training, yet they inherit fundamental drawbacks, including solver restrictions, forward–reverse inconsistency, and complicated integration with classifier-free guidance (CFG).
We introduce Diffusion Negative-aware FineTuning (DiffusionNFT), a new online RL paradigm that optimizes diffusion models directly on the forward process via flow matching. 
DiffusionNFT contrasts positive and negative generations to define an implicit policy improvement direction, naturally incorporating reinforcement signals into the supervised learning objective. 
This formulation enables training with arbitrary black-box solvers, eliminates the need for likelihood estimation, and requires only clean images rather than sampling trajectories for policy optimization. DiffusionNFT is up to $25\times$ more efficient than FlowGRPO in head-to-head comparisons, while being CFG-free. For instance, DiffusionNFT improves the GenEval score from 0.24 to 0.98 within 1k steps, while FlowGRPO achieves 0.95 with over 5k steps and additional CFG employment. By leveraging multiple reward models, DiffusionNFT significantly boosts the performance of SD3.5-Medium in every benchmark tested.
\end{abstract}
\begin{figure}[ht]

	\centering
	\begin{minipage}{.43\linewidth}
		\centering
			\includegraphics[width=\linewidth]{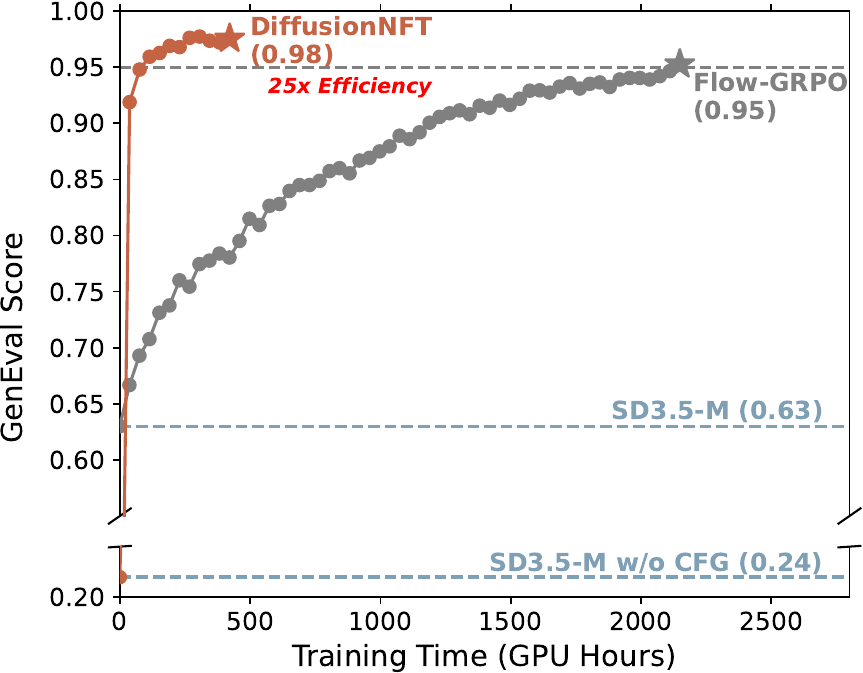}\\
\small{(a) }
	\end{minipage}
	\begin{minipage}{.4\linewidth}
	\centering
	\includegraphics[width=\linewidth]{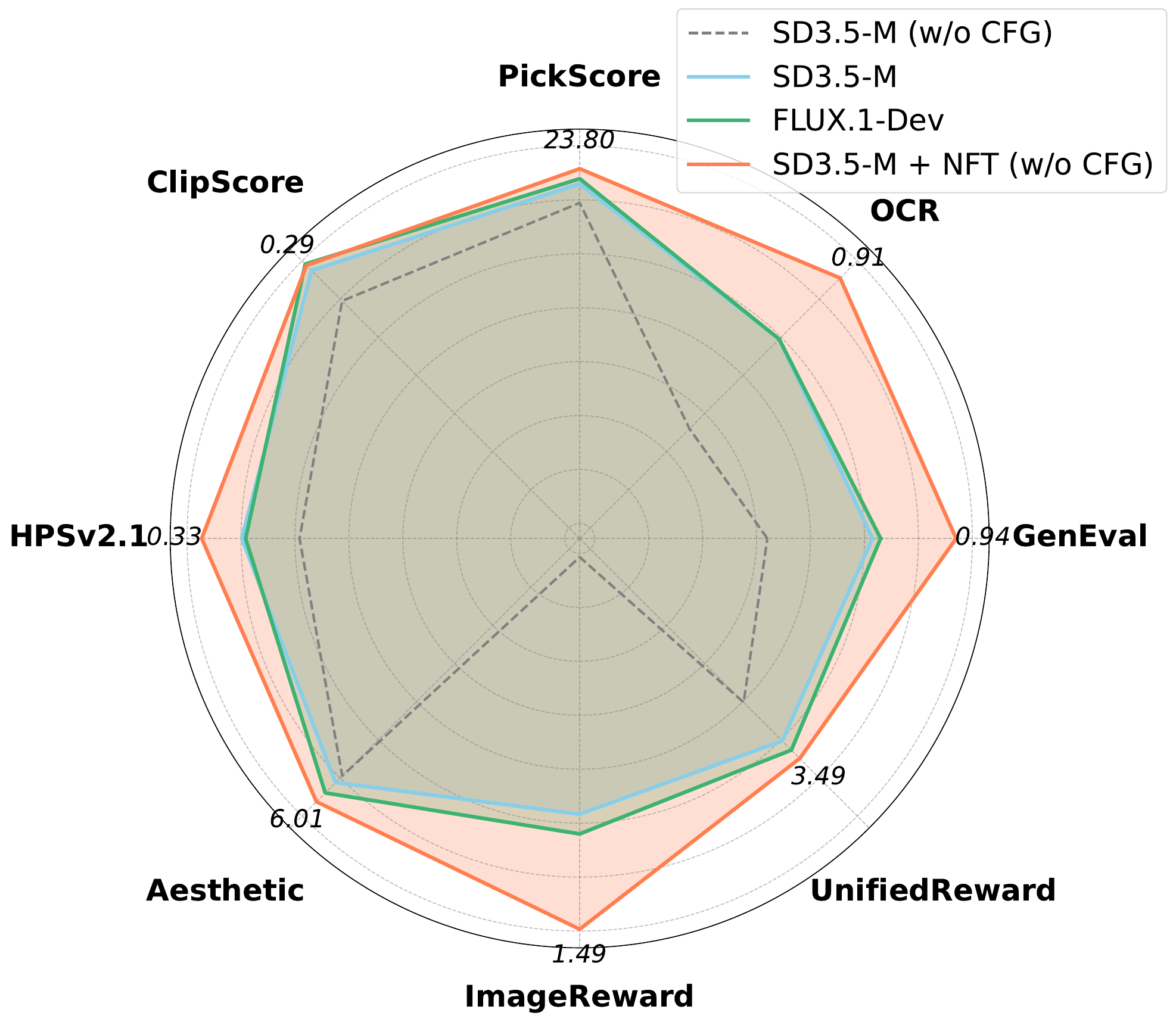}\\
\small{(b) }
\end{minipage}
   \vspace{-.05in}
	\caption{\label{fig:main} Performance of DiffusionNFT. \textbf{(a)} Head-to-head comparison with FlowGRPO on the GenEval task. \textbf{(b)} By employing multiple reward models, DiffusionNFT significantly boosts the performance of SD3.5-Medium in every benchmark tested, while being fully CFG-free.}
	\vspace{-.15in}
\end{figure}

\section{Introduction}
Online Reinforcement Learning (RL) has been pivotal in the post-training of LLMs, driving recent advances in LLMs' alignment and reasoning abilities \citep{achiam2023gpt, guo2025deepseek}. 
However, replicating similar success for diffusion models in visual generation is not straightforward. Policy Gradient algorithms assume that model likelihoods are exactly computable. This assumption holds for autoregressive models, but is inherently violated by diffusion models, where likelihoods can only be approximated via costly probabilistic ODE or variational bounds of SDE~\citep{song2021maximum}.
Recent works circumvent this barrier by discretizing the reverse sampling process, reframing diffusion generation as a multi-step decision-making problem~\citep{black2023training}. This makes transitions between adjacent steps tractable Gaussians, enabling direct application of existing RL algorithms like GRPO to the diffusion domain~\citep{xue2025dancegrpo, liu2025flow}.

Despite promising efforts made, we argue that GRPO-style diffusion reinforcement still faces fundamental limitations: (1) Forward inconsistency. Focusing solely on the reverse sampling process breaks adherence to the forward diffusion process, risking the model degenerating into cascaded Gaussians. (2) Solver restriction. The data collection process relies on first-order SDE samplers, precluding the full utilization of ODE or high-order solvers that are default to flow models and advantageous for generation efficiency. 
 (3) Complicated CFG integration. Diffusion models heavily rely on Classifier-Free Guidance (CFG) \citep{ho2022classifier}, which requires training both conditional and unconditional models. Current RL practices typically incorporate CFG in post-training,
leading to a complicated and inefficient two-model optimization scheme.

We aim to disentangle data collection, remove solver restriction, and maintain consistency with standard supervised pretraining in diffusion RL. As a diffusion policy admits a single forward (noising) process but multiple reverse (denoising) processes (e.g., different samplers), a natural question is:
\begin{align*}
\textit{Can diffusion reinforcement be performed on the forward process instead of the reverse?} 
\end{align*}

This paper proposes a novel online RL paradigm named Diffusion Negative-aware FineTuning (DiffusionNFT). Instead of building upon the conventional policy gradient framework, DiffusionNFT directly performs policy optimization on the forward diffusion process through the flow matching objective. Intuitively, it defines a contrastive improvement direction between two implicit policies learned on ``positive'' and ``negative'' generated samples split by reward signals, and optimizes toward the positive policy without modifying the sampling process.

\begin{figure}[t]
    \centering
    \includegraphics[width=0.8\linewidth]{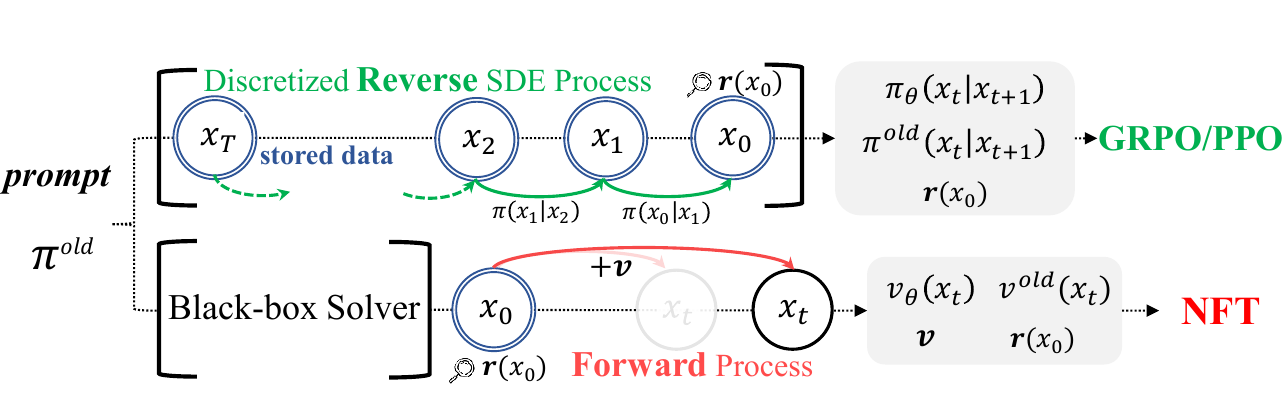}
\caption{\label{fig:forwardreverse}Comparison between Forward-Process RL (NFT) and Reverse-Process RL (GRPO). NFT allows using any solvers and does not require storing the whole sampling trajectory for optimization.}
\vspace{-0.1in}
\end{figure}

The forward-process RL formulation provides several practical benefits (Figure \ref{fig:forwardreverse}). First, DiffusionNFT allows data collection with arbitrary black-box solvers, rather than relying on first-order SDE samplers. Second, it eliminates the need to store entire sampling trajectories, requiring only clean images for policy optimization. 
Third, it is fully compatible with standard diffusion training, requiring minimal modifications to existing codebases. 
Finally, it is a native off-policy algorithm, naturally allowing decoupled training and sampling policies without importance sampling.

We evaluate DiffusionNFT by post-training SD3.5-Medium ~\citep{esser2024scaling} on multiple reward models. The entire training process deliberately operates in a CFG-free setting. Although this results in a significantly lower initialization performance, we find DiffusionNFT substantially improves performance across both in-domain and out-of-domain rewards, rapidly outperforming CFG and the GRPO baseline. We also conduct head-to-head comparisons against FlowGRPO in single-reward settings. Across four tasks tested, DiffusionNFT consistently exhibits 3$\times$ to 25$\times$ efficiency and achieves better final scores. For instance, it improves the GenEval score from 0.24 to 0.98 within 1k steps, while FlowGRPO achieves only 0.95 with over 5k steps and additional CFG employment.

DiffusionNFT is a direct RL alternative to conventional Policy Gradient methods, introducing the Negative-aware FineTuning (NFT) paradigm~\citep{nft2} into the diffusion domain.
Grounded in a supervised learning foundation, we believe this paradigm offers a valid path toward a general, unified, and native off-policy RL recipe across various modalities. 
\section{Background}
\subsection{Diffusion and Flow Models}
Diffusion models~\citep{ho2020denoising,song2020score} learn continuous data distributions by gradually perturbing clean data $\vx_0\sim\pi_0= p_\text{data}$ with Gaussian noise according to a forward process. Then, data can be generated by learning to reverse this process. 

The forward noising process admits a closed-form transition kernel $\pi_{t|0}(\vx_t|\vx_0)=\gN(\alpha_t\vx_0,\sigma_t^2\rmI)$ with a specific \textit{noise schedule} $\alpha_t,\sigma_t$, enabling reparameterization as 
\begin{equation*}
\vx_t=\alpha_t\vx_0+\sigma_t\epsilonv,\epsilonv\sim\gN(\bm 0,\rmI).
\end{equation*}

One way to learn diffusion models is to adopt the velocity parameterization $\vv_\theta(\vx_t,t)$~\citep{zheng2023improved}, which predicts the tangent of the trajectory, trained by minimizing
\begin{equation}
\label{eq:diffusion_loss}
    \E_{t,\vx_0\sim \pi_0, \epsilonv\sim\gN(\bm 0,\rmI)}[w(t)\|\vv_\theta(\vx_t,t)-\vv\|_2^2],
\end{equation}
where the target velocity $\vv$ is defined by the schedule's time derivatives as $\vv = \dot{\alpha}_t \vx_0 + \dot{\sigma}_t \epsilonv$ under the notation $\dot{f}_t\coloneq\mathrm d f_t/\mathrm dt$, and $w(t)$ is some weighting function. Reverse sampling typically follows the ODE form~\citep{song2020score} of the diffusion model, which is reduced to $\frac{\mathrm{d}\vx_t}{\mathrm{d}t}=\vv_\theta(\vx_t,t)$ using $\vv_\theta$. This formulation is known as flow matching~\citep{lipman2022flow}, where simple Euler discretization serves as an effective ODE solver, equivalent to DDIM~\citep{song2020denoising}.

Rectified flow~\citep{liu2022flow} can be considered as a special case of the above-discussed diffusion models, where $\alpha_t=1-t,\sigma_t=t$, which simplifies the velocity target to $\vv=\epsilonv-\vx_0$.

\subsection{Policy Gradient Algorithms for Diffusion Models}
In order to apply Policy Gradient algorithms such as PPO \citep{schulman2017proximal} or GRPO \citep{shao2024deepseekmath} to diffusion models, recent works~\citep{black2023training,fan2023dpok, liu2025flow,xue2025dancegrpo} formulate the diffusion sampling as a multi-step Markov Decision Process (MDP). This can be achieved by discretizing the reverse sampling process of diffusion models.

While flow models naturally admit simple and efficient sampling through ODE, the lack of stochasticity hinders the application of GRPO. FlowGRPO~\citep{liu2025flow} addresses this by using the SDE form~\citep{song2020score} under the velocity parameterization $\vv_\theta$ (see Appendix~\ref{appendix:flowsde}):
\begin{equation}
\label{eq:flowgrpo-sde}
    \mathrm{d}\vx_t = \Big[\vv_\theta(\vx_t,t) + \frac{g_t^2}{2t}\big(\vx_t + (1-t)\vv_\theta(\vx_t,t)\big)\Big]\mathrm{d}t + g_t \mathrm{d}\vw_t
\end{equation}
where $g_t=a\sqrt{\tfrac{t}{1-t}}$ controls the level of injected stochasticity. 
Discretizing it with Euler yields 
\begin{equation*}
    \pi_\theta(\vx_{t-\Delta t}\mid \vx_t)=\gN\Big(\vx_t+\Big[\vv_\theta(\vx_t,t)+\frac{g_t^2}{2t}(\vx_t+(1-t)\vv_\theta(\vx_t,t))\Big]\Delta t,\;g_t^2\Delta t\,\rmI\Big).
\end{equation*}
This makes transition kernels between adjacent steps likelihood tractable Gaussians, enabling the direct application of existing policy gradient algorithms, such as GRPO.

\section{Diffusion Reinforcement via Negative-aware Finetuning}

\subsection{Problem Setup}
\label{sec:problem}
\paragraph{Online RL.} Consider a pretrained diffusion policy $\pi^\text{old}$ and prompt datasets $\{\vc\}$. At each iteration, we sample $K$ images $\vx_0^{1:K}$ for prompt $\vc$, and then evaluate each image with a scalar reward function $r\in[0,1]$, representing its optimality probability $r(\vx_0,\vc) := p(\mathbf{o}=1|\vx_0, \vc)$ \citep{levine2018reinforcement}.

This optimality serves as a bridge from continuous-valued rewards to a binary partition. Collected data can be randomly split into two imaginary subsets. An image $\vx_0$ will have a probability $r$ of falling into the positive dataset $\mathcal{D}^+$ and otherwise the negative dataset $\mathcal{D}^-$. Given infinite samples, the underlying distributions of these two subsets are respectively
\begin{equation*}
    \pi^+(\vx_0|\vc) : = \pi^\text{old}(\vx_0|\mathbf{o}=1,\vc)  = \frac{p(\mathbf{o}=1|\vx_0,\vc) \pi^\text{old}(\vx_0|\vc)}{p_{\pi^\text{old}}(\mathbf{o}=1|\vc)} = \frac{r(\vx_0,\vc)}{p_{\pi^\text{old}}(\mathbf{o}=1|\vc)} \pi^\text{old}(\vx_0|\vc)
\end{equation*}
\begin{equation*}
    \pi^-(\vx_0|\vc) : = \pi^\text{old}(\vx_0|\mathbf{o}=0,\vc)  = \frac{p(\mathbf{o}=0|\vx_0,\vc) \pi^\text{old}(\vx_0|\vc)}{p_{\pi^\text{old}}(\mathbf{o}=0|\vc)} = \frac{1-r(\vx_0,\vc)}{1-p_{\pi^\text{old}}(\mathbf{o}=1|\vc)} \pi^\text{old}(\vx_0|\vc)
\end{equation*}

RL requires performing policy improvement at each iteration. The optimized policy $\pi^*$ satisfies
\begin{equation*}
    \quad \quad\quad   \E_{\pi^*(\cdot|\vc)} r(\vx_0,\vc) > \E_{\pi^\text{old}(\cdot|\vc)} r(\vx_0,\vc)  \quad \quad (\text{denoted as}\quad   \pi^* \succ \pi^\text{old})
\end{equation*}

\paragraph{Policy Improvement on Positive Data.} It is easy to prove that $\pi^+ \succ  \pi^\text{old} \succ \pi^-$ constantly holds, thus a straightforward improvement of $\pi^\text{old}$ can be $\pi^* = \pi^+$. To achieve this, previous work \citep{lee2023aligning} performs diffusion training solely on $\mathcal{D}^+$, known as Rejection FineTuning (RFT). 

Despite the simplicity, RFT cannot effectively leverage negative data in $\mathcal{D}^-$ \citep{nft2}.

\paragraph{Reinforcement Guidance.} We posit that negative feedback is crucial to policy improvement, especially for diffusion\footnote{We find that finetuning only on the positive data leads to collapse (Section~\ref{sec:ablation})}. Rather than treating $\pi^+$ as an optimization \textit{point}, we leverage both negative and positive data to derive an improvement \textit{direction} $\Delta \in \mathbb{R}^n$. The training target is defined as 
\begin{equation}
\label{eq:target}
    \vv^*(\vx_t, \vc, t) : =\vv^\text{old}(\vx_t, \vc, t) + \frac{1}{\beta} \Delta(\vx_t, \vc, t).
\end{equation}
where $\vv$ is the velocity predictor of the diffusion model, $\beta$ is a hyperparameter. This definition formally resembles diffusion guidance such as Classifier-Free Guidance (CFG)~\citep{ho2022classifier}. We term $\Delta(\vx_t, \vc, t)\in \mathbb{R}^n$ \textit{reinforcement guidance}, and $\frac{1}{\beta} \in \mathbb{R}$ guidance strength.

In Section \ref{sec:theory}, we address two challenges: 1. What is an appropriate form of $\Delta$ that enables policy improvement? 2. How to directly optimize $\vv_\theta \rightarrow \vv^*$ leveraging collected dataset $\mathcal{D}^+$ and $\mathcal{D}^-$?

\subsection{Negative-aware Diffusion Reinforcement with Forward Process}
\label{sec:theory}
In Eq.~\eqref{eq:target}, $\Delta$ corresponds to the distributional shift between an improved policy and the original policy. 
To formalize this, we first study the distributional difference between $\pi^+\succ\pi^\text{old}\succ\pi^-$.

\begin{theorem}[\textbf{Improvement Direction}]
\label{thrm:negative_loss} Consider diffusion models $\vv^+$, $\vv^-$, and $\vv^\text{old}$  for the policy triplet $\pi^+$, $\pi^-$, and $\pi^\text{old}$. The directional differences between these models are proportional:
\begin{align}
\label{eq:Delta}
\quad\quad\quad\quad\quad\quad\quad\quad\quad\Delta := & [1-\alpha(\vx_t)]  \;[\vv^\text{old}(\vx_t, \vc, t) - \vv^-(\vx_t, \vc, t)]\quad \text{(Reinforcement Guidance)}\nonumber\\
=& \quad\alpha(\vx_t)\quad \;  \;[\vv^+(\vx_t, \vc, t) - \vv^\text{old}(\vx_t, \vc, t)].
\end{align}

where $0\leq\alpha(\vx_t)\leq1$ is a scalar coefficient:
\begin{equation*}
\alpha(\vx_t) := \frac{\pi^+_t(\vx_t|\vc)}{\pi^\text{old}_t(\vx_t|\vc)} \E_{\pi^\text{old}(\vx_0|\vc)} r(\vx_0,\vc)\end{equation*}
\end{theorem}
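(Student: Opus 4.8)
The plan is to work with the marginal densities $\pi^{\pm}_t(\vx_t|\vc)$ obtained by pushing $\pi^{\pm}(\vx_0|\vc)$ through the forward kernel $\pi_{t|0}(\vx_t|\vx_0)=\gN(\alpha_t\vx_0,\sigma_t^2\rmI)$, and to use the standard fact that for any such mixture the velocity field has the conditional-expectation form $\vv(\vx_t,\vc,t)=\E[\dot\alpha_t\vx_0+\dot\sigma_t\epsilonv \mid \vx_t,\vc]$. First I would write down the reweighting identities that relate the three posteriors over $\vx_0$ given $\vx_t$: since $\pi^+(\vx_0|\vc)=\frac{r(\vx_0,\vc)}{p_{\pi^\text{old}}(\mathbf o=1|\vc)}\pi^\text{old}(\vx_0|\vc)$ and the forward kernel does not depend on $\vc$ or the reward, Bayes' rule gives
\begin{equation*}
\pi^+_{0|t}(\vx_0|\vx_t,\vc)=\frac{\pi^\text{old}_t(\vx_t|\vc)}{p_{\pi^\text{old}}(\mathbf o=1|\vc)\,\pi^+_t(\vx_t|\vc)}\,r(\vx_0,\vc)\,\pi^\text{old}_{0|t}(\vx_0|\vx_t,\vc),
\end{equation*}
and the analogous identity for $\pi^-$ with $r$ replaced by $1-r$ and $p_{\pi^\text{old}}(\mathbf o=1|\vc)$ by $1-p_{\pi^\text{old}}(\mathbf o=1|\vc)$. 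The prefactors here are exactly what will assemble into $\alpha(\vx_t)$; note the definition $\alpha(\vx_t)=\frac{\pi^+_t(\vx_t|\vc)}{\pi^\text{old}_t(\vx_t|\vc)}\,p_{\pi^\text{old}}(\mathbf o=1|\vc)$ makes the $\pi^+$ prefactor equal to $1/\alpha(\vx_t)$ times $r(\vx_0,\vc)$ relative to the old posterior.

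The key algebraic step is then to observe that the three posteriors satisfy a pointwise convex-combination-type relation. Concretely, since $r+(1-r)=1$, multiplying the old posterior by $r$ and by $1-r$ and adding recovers the old posterior, i.e.
\begin{equation*}
\pi^\text{old}_{0|t}(\vx_0|\vx_t,\vc)=\alpha(\vx_t)\,\pi^+_{0|t}(\vx_0|\vx_t,\vc)+\big(1-\alpha(\vx_t)\big)\,\pi^-_{0|t}(\vx_0|\vx_t,\vc),
\end{equation*}
provided one checks that the two prefactors $\alpha(\vx_t)$ and $1-\alpha(\vx_t)$ are indeed what the reweighting produces — this requires the marginal identity $\pi^\text{old}_t=p_{\pi^\text{old}}(\mathbf o=1|\vc)\,\pi^+_t+(1-p_{\pi^\text{old}}(\mathbf o=1|\vc))\,\pi^-_t$, which is immediate from integrating the $\vx_0$-level mixture against the forward kernel, and it also shows $0\le\alpha(\vx_t)\le 1$. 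Taking $\E[\dot\alpha_t\vx_0+\dot\sigma_t\epsilonv\mid\cdot]$ against each posterior and using linearity, the same convex combination passes to the velocities: $\vv^\text{old}=\alpha(\vx_t)\vv^+ + (1-\alpha(\vx_t))\vv^-$ (the $\epsilonv$ part also works because, conditional on $\vx_0$ and $\vx_t$, $\epsilonv=(\vx_t-\alpha_t\vx_0)/\sigma_t$ is the same deterministic function regardless of which policy we condition under). Rearranging $\vv^\text{old}=\alpha\vv^+ + (1-\alpha)\vv^-$ into $\alpha(\vv^+-\vv^\text{old})=(1-\alpha)(\vv^\text{old}-\vv^-)$ yields exactly Eq.~\eqref{eq:Delta}, and either side may be taken as the definition of $\Delta$.

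I would carry this out in the order: (i) fix notation for the forward-pushed marginals and posteriors; (ii) state and prove the Tweedie/conditional-expectation representation of $\vv^{\pm},\vv^\text{old}$; (iii) derive the Bayes reweighting of the posteriors and the marginal mixture identity; (iv) combine into the posterior convex combination and verify the coefficient is $\alpha(\vx_t)$ with $\alpha\in[0,1]$; (v) push through the conditional expectation and rearrange. The main obstacle I anticipate is step (iv): being careful that the $\vx_t$-dependent normalizing constants from the two separate Bayes' rules ($\pi^+$ vs.\ $\pi^-$) combine cleanly, i.e.\ that $\frac{r\,\pi^\text{old}_t}{p_{\pi^\text{old}}(\mathbf o=1|\vc)\pi^+_t}$ and $\frac{(1-r)\,\pi^\text{old}_t}{(1-p_{\pi^\text{old}}(\mathbf o=1|\vc))\pi^-_t}$ have $\vx_0$-independent parts summing to $1$ after weighting by $\pi^\text{old}_{0|t}$; this is where the marginal mixture identity is essential and where a sign or normalization slip would break the claimed proportionality. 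A secondary subtlety worth flagging explicitly is that $\alpha(\vx_t)$ depends on $\vc$ and $t$ as well, so the notation $\alpha(\vx_t)$ is shorthand; I would note this and confirm that the identity holds pointwise in $(\vx_t,\vc,t)$, which is all that is needed to define the reinforcement guidance $\Delta(\vx_t,\vc,t)$.
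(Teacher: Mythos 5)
Your proof is correct and follows essentially the same route as the paper's: establish the prior mixture $\pi^\text{old}=p(\mathbf o{=}1)\pi^+ + (1-p(\mathbf o{=}1))\pi^-$, push it through Bayes' rule and the diffused marginals to obtain the posterior convex combination $\pi^\text{old}_{0|t}=\alpha(\vx_t)\pi^+_{0|t}+(1-\alpha(\vx_t))\pi^-_{0|t}$ (the paper's Lemmas~\ref{lemma:priorsplit} and~\ref{lemma:PosteriorSplit}), then transfer it to velocities via the conditional-expectation form of the optimal $\vv$ and rearrange. The only cosmetic difference is that the paper writes the velocity as $a_t\vx_t+b_t\,\E[\vx_0\mid\vx_t,\vc]$ while you write it as $\E[\dot\alpha_t\vx_0+\dot\sigma_t\epsilonv\mid\vx_t,\vc]$; these are equivalent since $\epsilonv$ is an affine function of $\vx_0$ given $\vx_t$.
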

\begin{wrapfigure}{r}{0.5\textwidth}
\vspace{-8mm}
\includegraphics[width=1.0\linewidth]{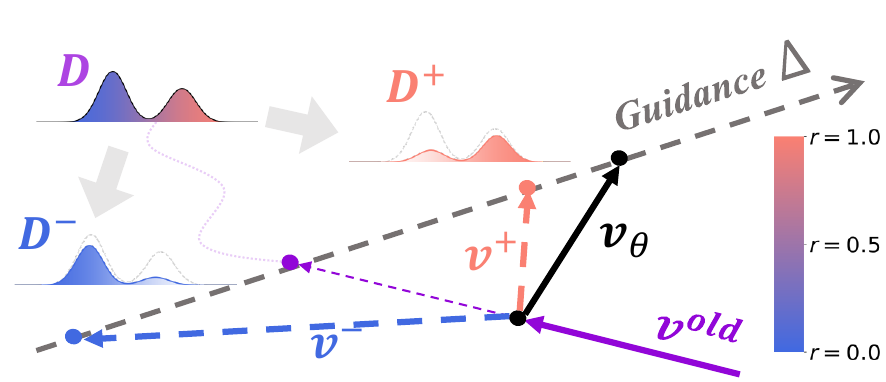}\\
\vspace{-4mm}
\caption{\label{fig:illu} Improvement Direction.}
\vspace{-6mm}
\end{wrapfigure}
Eq.~\eqref{eq:Delta} indicates an ideal guidance direction $\Delta$ for improving over $\vv^\text{old}$. With appropriate guidance strength, policy improvement can be guaranteed. For instance, let $\beta = \alpha(\vx_t)$ in Eq.~\eqref{eq:target}, we have
$\vv^*(\vx_t, \vc, t) = \vv^\text{old}(\vx_t, \vc, t) + \frac{1}{\alpha(\vx_t)} \Delta(\vx_t, \vc, t) = \vv^+(\vx_t, \vc, t)$, such that $\pi^*=\pi^+\succ \pi^\text{old}$ holds. Figure \ref{fig:illu} contains an illustration for the improvement direction $\Delta$.

Having defined a valid optimization target $\vv^*$ with Eq.~\eqref{eq:target} and \eqref{eq:Delta}, we now introduce a training objective that directly optimizes $\vv_\theta$ towards $\vv^*$:

\begin{figure}[t]
\centering	\includegraphics[width=1.0\linewidth]{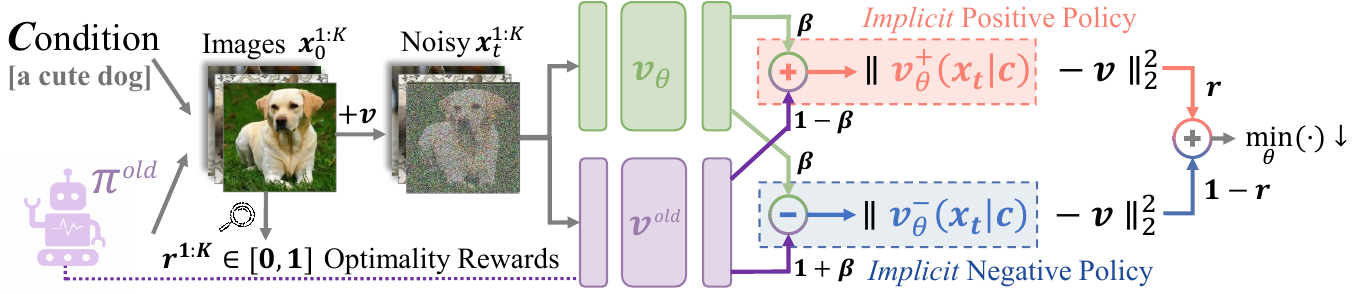}\\
   \vspace{-.05in}
	\caption{\label{fig:method3} DiffusionNFT jointly optimizes two dual diffusion objectives, on both positive ($r=1$) and negative ($r=0$) branches. Rather than training two independent models $\vv_\theta^+$ and $\vv_\theta^-$, it adopts an implicit parameterization technique that directly optimizes a single target policy $\vv_\theta$.}
	\vspace{-.10in}
\end{figure}
\begin{theorem}[\textbf{Policy Optimization}]\label{thrm:diffusionnft} Consider the training objective:
\begin{equation}
\label{Eq:GFT_diffusion_loss_all}
\mathcal{L}(\theta) =  \E_{\vc, \pi^\text{old}(\vx_0|\vc), t}\;r
    \| \vv_\theta^+(\vx_t, \vc, t) - \vv\|_2^2 + (1-r) 
    \| \vv_\theta^-(\vx_t, \vc, t) - \vv\|_2^2,
\end{equation}
\begin{equation*}
        \quad\quad\text{where} \quad \vv_\theta^+(\vx_t, \vc, t) := (1-\beta) \vv^\text{old}(\vx_t, \vc, t)+\beta \vv_\theta(\vx_t, \vc, t), \quad \text{(Implicit positive policy)}
\end{equation*}
\begin{equation*}
    \;\;\quad\quad\text{and}\;\; \quad \vv_\theta^-(\vx_t, \vc, t) := (1+\beta) \vv^\text{old}(\vx_t, \vc, t)-\beta \vv_\theta(\vx_t, \vc, t). \quad\text{(Implicit negative policy)}
\end{equation*}
Given unlimited data and model capacity, the optimal solution of Eq.~\eqref{Eq:GFT_diffusion_loss_all} satisfies
\begin{equation}
\label{eq:final_target}
    \vv_{\theta^*}(\vx_t, \vc, t) = \vv^\text{old}(\vx_t, \vc, t) + \frac{2}{\beta} \Delta(\vx_t, \vc, t).
\end{equation}
\end{theorem}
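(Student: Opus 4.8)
The plan is to reduce the functional minimization of Eq.~\eqref{Eq:GFT_diffusion_loss_all} to a pointwise least-squares problem in the single vector $\vv_\theta(\vx_t,\vc,t)$, and then to close it using the two equivalent expressions for $\Delta$ from Theorem~\ref{thrm:negative_loss}. First I would use the definitions of $\pi^+$ and $\pi^-$ from Section~\ref{sec:problem} to eliminate the reward weights: because $r(\vx_0,\vc)\,\pi^\text{old}(\vx_0|\vc)=p_{\pi^\text{old}}(\mathbf{o}=1|\vc)\,\pi^+(\vx_0|\vc)$, and symmetrically for the $(1-r)$ branch, $\mathcal L(\theta)$ splits into two ordinary flow-matching losses --- one fitting $\vv_\theta^+$ with $\vx_0\sim\pi^+(\cdot|\vc)$ and carrying the prefactor $p_{\pi^\text{old}}(\mathbf{o}=1|\vc)$, the other fitting $\vv_\theta^-$ with $\vx_0\sim\pi^-(\cdot|\vc)$ and prefactor $p_{\pi^\text{old}}(\mathbf{o}=0|\vc)$. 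For each branch I then apply the bias--variance identity $\E[\|f(\vx_t)-\vv\|_2^2\mid\vx_t]=\|f(\vx_t)-\E[\vv\mid\vx_t]\|_2^2+\Var[\vv\mid\vx_t]$ together with the fact that the conditional mean $\E[\vv\mid\vx_t]$ under a given policy's forward process is, by the standard flow-matching characterization, exactly that policy's velocity field ($\vv^+$ and $\vv^-$ respectively). This rewrites $\mathcal L(\theta)$, up to an additive $\theta$-independent constant, as a positively weighted integral of $w^+\|\vv_\theta^+-\vv^+\|_2^2+w^-\|\vv_\theta^--\vv^-\|_2^2$ over $(\vx_t,\vc,t)$, with $w^+\propto p_{\pi^\text{old}}(\mathbf{o}=1|\vc)\,\pi^+_t(\vx_t|\vc)$ and $w^-\propto p_{\pi^\text{old}}(\mathbf{o}=0|\vc)\,\pi^-_t(\vx_t|\vc)$.

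Next I would identify the weights. Using $p_{\pi^\text{old}}(\mathbf{o}=1|\vc)=\E_{\pi^\text{old}(\vx_0|\vc)}r(\vx_0,\vc)$ and the mixture identity $p_{\pi^\text{old}}(\mathbf{o}=1|\vc)\,\pi^+_t+p_{\pi^\text{old}}(\mathbf{o}=0|\vc)\,\pi^-_t=\pi^\text{old}_t$, the normalized positive weight at a point is $w^+/(w^++w^-)=\tfrac{\pi^+_t(\vx_t|\vc)}{\pi^\text{old}_t(\vx_t|\vc)}\,\E_{\pi^\text{old}(\vx_0|\vc)}r(\vx_0,\vc)=\alpha(\vx_t)$, exactly the coefficient defined in Theorem~\ref{thrm:negative_loss}. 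Since ``unlimited data and model capacity'' lets me treat $\vv_\theta(\vx_t,\vc,t)$ as a free vector, and since this one vector is the only degree of freedom entering both $\vv_\theta^+$ and $\vv_\theta^-$ at a fixed point, minimizing $\mathcal L$ decouples across points: for each point in the support of $\pi^\text{old}_t(\cdot|\vc)$ I must minimize
\[
\ell(\vu):=\alpha\,\big\|(1-\beta)\vv^\text{old}+\beta\vu-\vv^+\big\|_2^2+(1-\alpha)\,\big\|(1+\beta)\vv^\text{old}-\beta\vu-\vv^-\big\|_2^2,
\]
where $\vu:=\vv_\theta(\vx_t,\vc,t)$ and all velocity fields are evaluated at that point.

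Then I solve the quadratic. Setting $\nabla_{\vu}\ell=\vzero$ and dividing by $2\beta$ gives $\alpha(\vv_\theta^+-\vv^+)=(1-\alpha)(\vv_\theta^--\vv^-)$; substituting the affine forms of $\vv_\theta^\pm$ and collecting the $\vu$- and $\vv^\text{old}$-terms yields $\beta\vu=\beta\vv^\text{old}+\big[\alpha\vv^+-(1-\alpha)\vv^-\big]-(2\alpha-1)\vv^\text{old}$. Substituting the two equal expressions for $\Delta$ from Eq.~\eqref{eq:Delta}, namely $\alpha\vv^+=\alpha\vv^\text{old}+\Delta$ and $(1-\alpha)\vv^-=(1-\alpha)\vv^\text{old}-\Delta$, collapses the bracket to $(2\alpha-1)\vv^\text{old}+2\Delta$; the $(2\alpha-1)\vv^\text{old}$ terms cancel and we are left with $\beta\vu=\beta\vv^\text{old}+2\Delta$, i.e.\ $\vv_{\theta^*}(\vx_t,\vc,t)=\vv^\text{old}(\vx_t,\vc,t)+\tfrac{2}{\beta}\Delta(\vx_t,\vc,t)$, which is Eq.~\eqref{eq:final_target}. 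Uniqueness is immediate since the Hessian of $\ell$ is $2\beta^2\big(\alpha+(1-\alpha)\big)\rmI=2\beta^2\rmI\succ 0$.

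I expect the main obstacle to be the bookkeeping in the first two steps rather than the final algebra: one must carry the branch prefactors $p_{\pi^\text{old}}(\mathbf{o}=\cdot\,|\vc)$ together with the marginal densities $\pi^\pm_t$ and verify that their normalized combination is precisely the $\alpha(\vx_t)$ of Theorem~\ref{thrm:negative_loss} and not some other reweighting --- it is exactly this matching that makes the coupled least-squares problem collapse to $\vv^\text{old}+\tfrac{2}{\beta}\Delta$. A minor loose end is the degenerate set where $\alpha\in\{0,1\}$ so one branch weight vanishes: there Eq.~\eqref{eq:Delta} already forces $\Delta=\vzero$, and the surviving branch still pins $\vu=\vv^\text{old}$, so Eq.~\eqref{eq:final_target} continues to hold. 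Everything else --- the bias--variance split and the quadratic minimization --- is routine.
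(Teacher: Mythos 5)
Your proposal is correct and follows essentially the same route as the paper's proof: split the loss along the positive/negative branches using $r\,\pi^\text{old}=p_{\pi^\text{old}}(\mathbf{o}{=}1|\vc)\,\pi^+$ and its complement, apply the conditional bias--variance identity to reduce each branch to a squared distance to $\vv^+$ or $\vv^-$ with weights $\alpha(\vx_t)$ and $1-\alpha(\vx_t)$, invoke Theorem~\ref{thrm:negative_loss}, and solve the resulting pointwise quadratic. The only cosmetic difference is at the final step: you minimize via the first-order condition $\alpha(\vv_\theta^+-\vv^+)=(1-\alpha)(\vv_\theta^--\vv^-)$, whereas the paper completes the square to read off the minimizer directly as $\vv^\text{old}+\tfrac{2}{\beta}\Delta$; both are equivalent and land on Eq.~\eqref{eq:final_target}.
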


Theorem \ref{thrm:diffusionnft} presents a new off-policy RL paradigm (Figure \ref{fig:method3}). Instead of applying Policy Gradient, it adopts supervised learning (SL) objectives, but additionally trains on online negative data $\mathcal{D}^-$. This renders the algorithm highly versatile, compatible with existing SL methods. We term our method \textit{Diffusion Negative-aware FineTuning} (\textbf{DiffusionNFT}), highlighting its negative-aware SL nature and conceptual similarity to parallel algorithm NFT in language models \citep{nft2}.

Below, we discuss several distinctive advantages of DiffusionNFT.

\textbf{1. Forward Consistency.} 
In contrast to policy gradient methods (e.g., FlowGRPO), which formulated RL on the reverse diffusion process, DiffusionNFT defines a typical diffusion loss on the forward process. This preserves what we term \textit{forward consistency}—the adherence of the diffusion model's underlying probability density to the Fokker-Planck equation \citep{oksendal2003stochastic,song2020score}, ensuring that the learned model corresponds to a valid forward process (i.e., $\vx_t$ are correctly coupled with $\vx_0$ through a joint distribution $\pi_\theta(\vx_t,\vx_0)=\pi_\theta(\vx_0)\pi_{t|0}(\vx_t|\vx_0)$).

\textbf{2. Solver Flexibility.} DiffusionNFT fully decouples policy training and data sampling. This enables the full utilization of any black-box solvers throughout sampling, rather than relying on first-order SDE samplers. It also eliminates the need to store the entire sampling trajectory during data collection, requiring only clean images with their associated rewards for training.

\textbf{3. Implicit Guidance Integration.} Intuitively, DiffusionNFT defines a guidance direction $\Delta$ and apply such guidance to the old policy $\vv^\text{old}$ (Eq. \eqref{eq:final_target}). However, instead of learning a separate guidance model $\Delta_\theta$ and employing guided sampling, it adopts an implicit parameterization technique that enables direct integration of reinforcement guidance into the learned policy. This technique, inspired by recent advances in guidance-free training \citep{gft}, allows us to perform RL continuously on a single policy model, which is crucial to online reinforcement. 

\textbf{4. Likelihood-Free Formulation.}
Previous diffusion RL methods are fundamentally constrained by their reliance on likelihood approximation. Whether approximating the marginal data likelihood 
with variational bounds and applying Jensen's inequality to reduce loss computation cost~\citep{wallace2024diffusion}, or discretizing the reverse process to estimate sequence likelihood~\citep{black2023training}, they inevitably introduce systematic estimation bias into diffusion post-training. In contrast, DiffusionNFT is inherently likelihood-free, bypassing such compromises.

\subsection{Practical Implementation}
\label{sec:practical}
\begin{algorithm}[t]
   \caption{Diffusion Negative-aware FineTuning (\textbf{DiffusionNFT})}
   \label{alg:DiffusionNFT}
   \textbf{Require:} Pretrained diffusion policy $\vv^\text{ref}$, raw reward function $r^\text{raw}(\cdot) \in \mathbb{R}$, prompt dataset $\{\vc\}$.\\
   \textbf{Initialize:} Data collection policy $\vv^\text{old} \leftarrow \vv^\text{ref}$, training policy $\vv_\theta \leftarrow \vv^\text{ref}$, data buffer $\mathcal{D} \leftarrow \emptyset$.
   \begin{algorithmic}[1]
    \For {\text{each iteration} $i$}
        \For {\text{each sampled prompt $\vc$}} \Comment{Rollout Step, Data Collection}
            \State Collect $K$ clean images $\vx^{1:K}_0$, and evaluate their rewards $\{r^\text{raw}\}^{1:K}$.
            \State Normalize raw rewards in group: $r^\text{norm} := r^\text{raw} - \texttt{mean}(\{r^\text{raw}\}^{1:K})$.
            \State Define optimality probability $r = 0.5+ 0.5*\texttt{clip}\{r^\text{norm} / Z_\vc, -1, 1\} $.
            \State $\mathcal{D} \leftarrow \{\vc, \; \vx_0^{1:K}, r^{1:K} \in [0,1]\}$.
        \EndFor
        \For {\text{each mini batch $\{\vc, \vx_0, r\} \in \mathcal{D}$}} \Comment{Gradient Step, Policy Optimization}
        \State Forward diffusion process: $\vx_t = \alpha_t\vx_0 + \sigma_t\epsilonv$; \; $\vv = \dot\alpha_t\vx_0+\dot\sigma_t\epsilonv$.
        \State Implicit positive velocity: $\vv_\theta^+(\vx_t, \vc, t) := (1-\beta) \vv^\text{old}(\vx_t, \vc, t)+\beta \vv_\theta(\vx_t, \vc, t)$. 
        \State Implicit negative velocity: $\vv_\theta^-(\vx_t, \vc, t) := (1+\beta) \vv^\text{old}(\vx_t, \vc, t)-\beta \vv_\theta(\vx_t, \vc, t)$. 
        \State $\theta \leftarrow \theta - \lambda \nabla_{\theta}\left[ r
    \| \vv_\theta^+(\vx_t, \vc, t) - \vv\|_2^2 + (1-r) 
    \| \vv_\theta^-(\vx_t, \vc, t) - \vv\|_2^2 \right]$.\hspace{5mm} (Eq.~\eqref{Eq:GFT_diffusion_loss_all})
        \EndFor
        \State Update data collection policy $\theta^\text{old} \leftarrow \eta_i \theta^\text{old} + (1-\eta_i) \theta$, and clear buffer $\mathcal{D} \leftarrow \emptyset$. \Comment{Online Update}
    \EndFor
   \end{algorithmic}
   \textbf{Output:} $v_\theta$
\end{algorithm}

We provide DiffusionNFT pseudo code in Algorithm \ref{alg:DiffusionNFT}. Below, we elaborate on key design choices.

\textbf{Optimality Reward.} In most visual reinforcement settings, rewards manifest as unconstrained continuous scalars rather than binary optimality signals. Motivated by existing GRPO practices \citep{shao2024deepseekmath, liu2025flow, xue2025dancegrpo}, we first transform the raw reward $r^\text{raw}$ into $r\in[0,1]$ which represents the optimality probability:
\begin{equation*}
    r(\vx_0,\vc) := \frac{1}{2} + \frac{1}{2} \texttt{clip} \left[\frac{r^\text{raw}(\vx_0, 
    \vc) - \E_{\pi^\text{old}(\cdot|\vc)} r^\text{raw}(\vx_0, \vc)}{Z_\vc},-1,1\right].
\end{equation*}

$Z_\vc > 0$ is some normalizing factor, which could take the form of a global reward \texttt{std}. We sample $K$ images for each prompt $\vc$ during data collection, so the average reward $\E_{\pi^\text{old}(\cdot|\vc)} r^\text{raw}(\vx_0, \vc)$ for each prompt can be estimated.

\textbf{Soft Update of Sampling Policy.} The off-policy nature of DiffusionNFT decouples the sampling policy $\pi^\text{old}$ from the training policy $\pi_\theta$. This obviates the need for a "hard" update ($\pi^\text{old}\leftarrow \pi^\theta$) after each iteration. Instead, we leverage this property to employ a ``soft" EMA update:
\begin{equation*}
    \theta^\text{old} \leftarrow \eta_i \theta^\text{old} + (1-\eta_i) \theta
\end{equation*}
where $i$ is the iteration number. The parameter $\eta$ governs a trade-off between learning speed and stability. A strictly on-policy scheme ($\eta=0$) yields rapid initial progress but is prone to severe instability, leading to catastrophic collapse. Conversely, a nearly offline approach ($\eta\rightarrow 1$) is robustly stable but suffers from impractically slow convergence (Figure \ref{fig:ablation-ema}).

\textbf{Adaptive Loss Weighting.} Typical diffusion loss includes a time-dependent weighting $w(t)$ (Eq.~\eqref{eq:diffusion_loss}). Instead of manual tuning, we adopt an adaptive weighting scheme. The velocity predictor $\vv_\theta$ can be equivalently transformed into $\vx_0$ predictor, denoted as $\vx_\theta$ (e.g., $\vx_\theta=\vx_t-t\vv_\theta$ under rectified flow schedule). We replace the weighting with a form of self-normalized $\vx_0$ regression, motivated by the diffusion distillation method DMD~\citep{yin2024one}:
\begin{equation*}
    w(t)\|\vv_\theta(\vx_t,\vc,t)-\vv\|_2^2\leftarrow
    \frac{\| \vx_\theta(\vx_t, \vc, t) - \vx_0\|_2^2}{\texttt{sg}(\texttt{mean}(\texttt{abs}( \vx_\theta(\vx_t, \vc, t) - \vx_0)))}
\end{equation*}
where \texttt{sg} is the stop-gradient operator. We find it typically leads to faster training (Figure \ref{fig:ablation-weight}). 

\textbf{CFG-Free Optimization.} Classifier-Free Guidance (CFG)~\citep{ho2022classifier} is a default technique to enhance generation quality at inference time, yet it complicates post-training and reduces efficiency. Conceptually, we interpret CFG as an offline form of reinforcement guidance (Eq.~\eqref{eq:Delta}), where conditional and unconditional models correspond to positive and negative signals. With this understanding, we discard CFG in our algorithm design, and the policy is initialized solely by the conditional model. Despite this seemingly poor initialization, we observe that performance surges and quickly surpasses the CFG baseline (Figure~\ref{fig:main}). This suggests that the functionality of CFG can be effectively learned or substituted through RL post-training, echoing recent studies that achieve strong performance without CFG through post-training \citep{CCA,gft,DDO}.

\section{Experiments}
We demonstrate the potential of DiffusionNFT through three perspectives:
(1) multi-reward joint training for strong CFG-free performance,
(2) head-to-head comparison with FlowGRPO on single rewards, and
(3) ablation studies on key design choices.
\subsection{Experimental Setup}
Our experiments are based on \texttt{SD3.5-Medium}~\citep{esser2024scaling} at 512$\times$512 resolution, with most settings aligned with FlowGRPO~\citep{liu2025flow}.

\textbf{Reward Models.} (1) Rule-based rewards, including \texttt{GenEval}~\citep{ghosh2023geneval} for compositional image generation and \texttt{OCR} for visual text rendering, where the partial reward assignment strategies follow FlowGRPO. (2) Model-based rewards, including \texttt{PickScore}~\citep{kirstain2023pick}, \texttt{ClipScore}~\citep{hessel2021clipscore}, 
\texttt{HPSv2.1}~\citep{wu2023human}, \texttt{Aesthetics}~\citep{schuhmann2022aesthetics}, \texttt{ImageReward}~\citep{xu2023imagereward} and \texttt{UnifiedReward}~\citep{wang2025unified}, which measure image quality, image-text alignment and human preference. 

\textbf{Prompt Datasets.} For \texttt{GenEval} and \texttt{OCR}, we use the corresponding training and test sets from FlowGRPO. For other rewards, we train on \texttt{Pick-a-Pic}~\citep{kirstain2023pick} and evaluate on \texttt{DrawBench}~\citep{saharia2022photorealistic}.

\textbf{Training and Evaluation.} We finetune with LoRA ($\alpha=64$, $r=32$). Each epoch consists of 48 groups with group size $G=24$. We use 10 rollout sampling steps for head-to-head comparison and ablation studies, and 40 steps for best visual quality in multi-reward training. Evaluation is performed with 40-step first-order ODE sampler. Additional details are provided in Appendix~\ref{appendix:exp-details}.

\subsection{Multi-Reward Joint Training}
\begin{table*}[t]
    \centering
    \renewcommand{\arraystretch}{1.1}
    \vspace{-2mm}
    \caption{\textbf{Evaluation Results.} \hl{Gray-colored}: In-domain reward. $^\dagger$ Evaluated on official checkpoints. $^\ddagger$Evaluated under 1024$\times$1024 resolution. \textbf{Bold}: best; \underline{Underline}: second best.}
    \resizebox{\linewidth}{!}{
        \begin{tabular}{lccccccccc}
            \toprule
            \multirow{2}{*}{\textbf{Model}}& \multirow{2}{*}{\textbf{\#Iter}} & \multicolumn{2}{c}{\textbf{Rule-Based}} & \multicolumn{6}{c}{\textbf{Model-Based}} \\ 
            \cmidrule(lr){3-4} \cmidrule(r){5-10} 
            && \textbf{GenEval} & \textbf{OCR}  & \textbf{PickScore} & \textbf{ClipScore} & \textbf{HPSv2.1} &\textbf{Aesthetic} & \textbf{ImgRwd} & \textbf{UniRwd} \\ 
            \midrule
            SD-XL$^\ddagger$ & — & 0.55 & 0.14 & 22.42 & 0.287 & 0.280 & 5.60 & 0.76 & 2.93 \\
            SD3.5-L$^\ddagger$ & — & 0.71 & 0.68 & 22.91 & 0.289 & 0.288 & 5.50 & 0.96 & 3.25 \\
            FLUX.1-Dev & — & 0.66 & 0.59 & 22.84 & 0.295 & 0.274 & 5.71 & 0.96 & 3.27 \\
            \midrule
            SD3.5-M (w/o CFG) & — & 0.24 & 0.12 & 20.51 & 0.237 & 0.204 & 5.13 & -0.58 & 2.02 \\
            + CFG & — & 0.63 & 0.59 & 22.34 & 0.285 & 0.279 & 5.36 & 0.85 & 3.03 \\
            \quad+ FlowGRPO$^\dagger$ & $>$5k & \cellcolor{llgray}\textbf{0.95} & 0.66 & 22.51 & \textbf{0.293} & 0.274 & 5.32 & 1.06 & 3.18 \\
             & 2k & 0.66 & \cellcolor{llgray}\textbf{0.92} & 22.41 & \underline{0.290} & 0.280 & 5.32 & 0.95 & 3.15 \\
             & 4k & 0.54 & 0.68 & \cellcolor{llgray}\underline{23.50} & 0.280 & \underline{0.316} & \underline{5.90} & \underline{1.29} & \underline{3.37} \\
             + Ours & 1.7k & \cellcolor{llgray}\underline{0.94} & \cellcolor{llgray}\underline{0.91} & \cellcolor{llgray}\textbf{23.80} & \cellcolor{llgray}\textbf{0.293} & \cellcolor{llgray}\textbf{0.331} & \textbf{6.01} & \textbf{1.49} & \textbf{3.49} \\
            \bottomrule
            \end{tabular}
}
\vspace{-1mm}
\label{tab:all_task}
\end{table*}
\begin{figure}[t]
\centering	\includegraphics[width=1.0\linewidth]{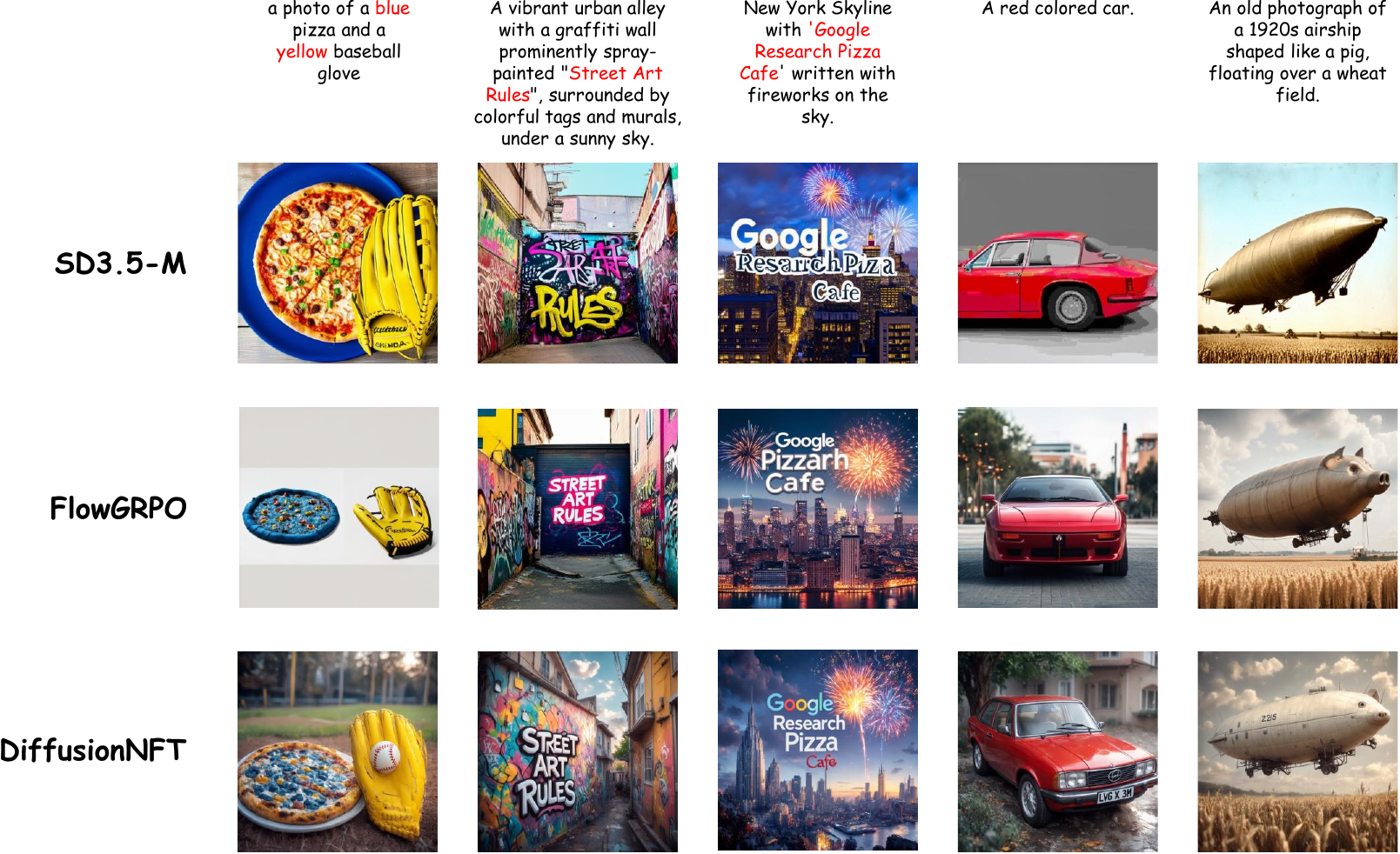}\\
   \vspace{-.05in}
	\caption{\label{fig:qualitative} \textbf{Qualitative Comparison.} The prompts are taken from \texttt{GenEval}, \texttt{OCR} and \texttt{DrawBench} respectively, where we compare the corresponding FlowGRPO model with our model.}
	\vspace{-.10in}
\end{figure}

We first assess DiffusionNFT's effectiveness in comprehensively enhancing the base model. Starting from the CFG-free SD3.5-M (2.5B parameters), we jointly optimize five rewards: \texttt{GenEval}, \texttt{OCR}, \texttt{PickScore}, \texttt{ClipScore}, and \texttt{HPSv2.1}. Since the rewards are based on different prompts, we first train on \texttt{Pick-a-Pic} with model-based rewards to strengthen alignment and human preference, followed by rule-based rewards (\texttt{GenEval}, \texttt{OCR}). Out-of-domain evaluation is conducted on \texttt{Aesthetics}, \texttt{ImageReward}, and \texttt{UnifiedReward}.

As shown in Table~\ref{tab:all_task}, our final CFG-free model not only surpasses CFG and matches FlowGRPO (fitted only single rewards) on both in-domain and out-of-domain metrics, but also outperforms CFG-based larger models such as SD3.5-L (8B parameters) and FLUX.1-Dev (12B parameters)~\citep{flux2024}. Qualitative comparison in Figure~\ref{fig:qualitative} demonstrates the superior visual quality of our method.

\subsection{Head-to-Head Comparison}
\begin{figure}[t]

	\centering
	\begin{minipage}{.32\linewidth}
		\centering
			\includegraphics[width=\linewidth]{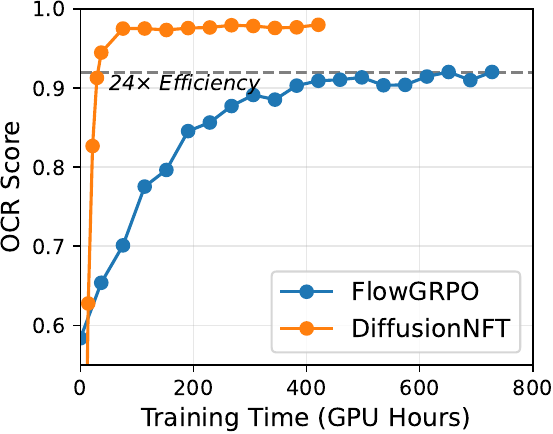}\\
\small{(a) }
	\end{minipage}
	\begin{minipage}{.32\linewidth}
	\centering
	\includegraphics[width=\linewidth]{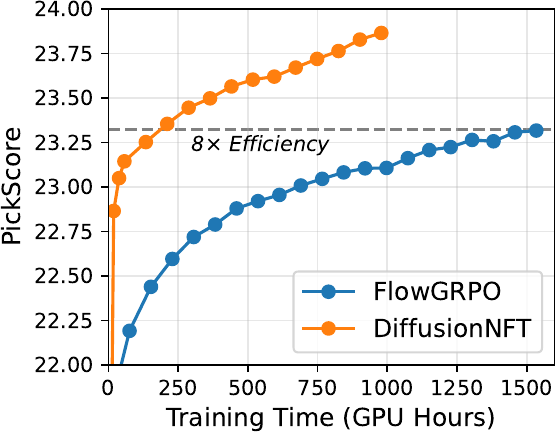}\\
\small{(b) }
\end{minipage}
	\begin{minipage}{.32\linewidth}
	\centering
	\includegraphics[width=\linewidth]{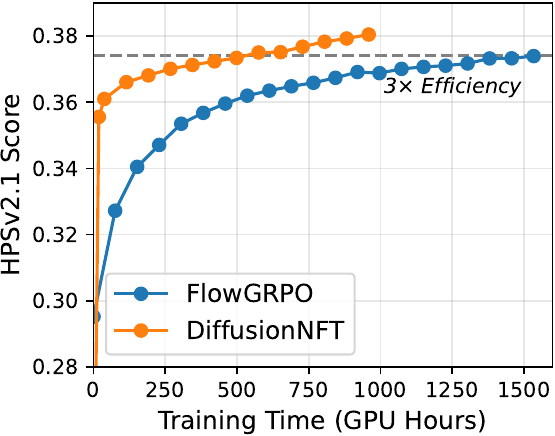}\\
\small{(c) }
\end{minipage}
   \vspace{-.05in}
	\caption{\label{fig:head-to-head} Head-to-head comparison between DiffusionNFT with FlowGRPO on single rewards.}
	\vspace{-.15in}
\end{figure}

We conduct head-to-head comparisons with FlowGRPO on single training rewards. As shown in Figure~\ref{fig:main}(a) and Figure~\ref{fig:head-to-head}, our method is $3\times$ to $25\times$ more efficient in terms of wall-clock time, achieving \texttt{GenEval} score of 0.98 within only $\sim$1k iterations. This demonstrates that CFG-free models can rapidly adapt to specific reward environments under our framework.
\subsection{Ablation Studies}
\label{sec:ablation}
\begin{figure}[ht]
    \centering

    \begin{minipage}[b]{.65\linewidth}
        \centering
        \begin{minipage}{.49\linewidth}
            \centering
            \includegraphics[width=\linewidth]{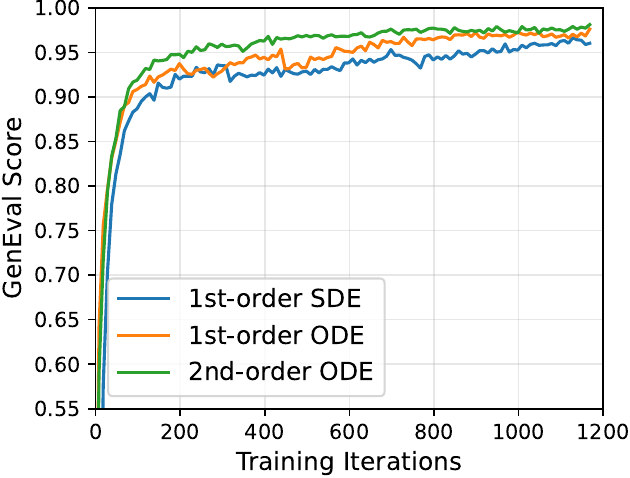}\\
            \small{(a)}
        \end{minipage}
        \begin{minipage}{.49\linewidth}
            \centering
            \includegraphics[width=\linewidth]{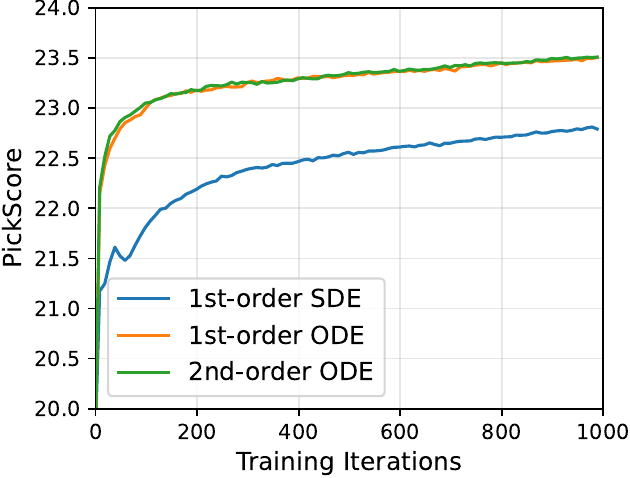}\\
            \small{(b)}
        \end{minipage}
        \vspace{-.1in}
        \captionof{figure}{\label{fig:ablation-sampler} Different diffusion samplers for data collection.}
    \end{minipage}
    \hfill
    \begin{minipage}[b]{.32\linewidth}
        \centering
        \includegraphics[width=\linewidth]{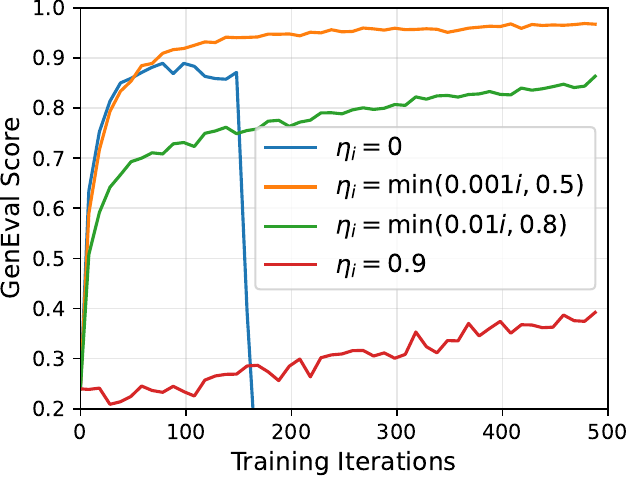}
        \vspace{-.05in}
        \captionof{figure}{\label{fig:ablation-ema} \footnotesize Soft-update strategies.}
    \end{minipage}
\vspace{-.1in}
\end{figure}
\begin{figure}[ht]
    \centering

    \begin{minipage}[b]{.65\linewidth}
        \centering
        \begin{minipage}{.49\linewidth}
            \centering
            \includegraphics[width=\linewidth]{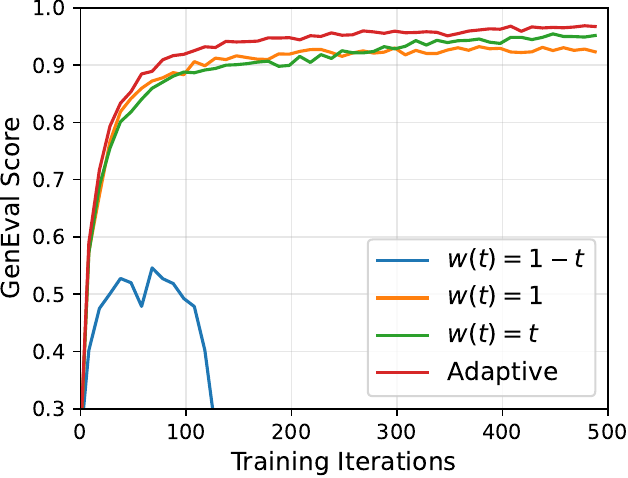}\\
            \small{(a)}
        \end{minipage}
        \begin{minipage}{.49\linewidth}
            \centering
            \includegraphics[width=\linewidth]{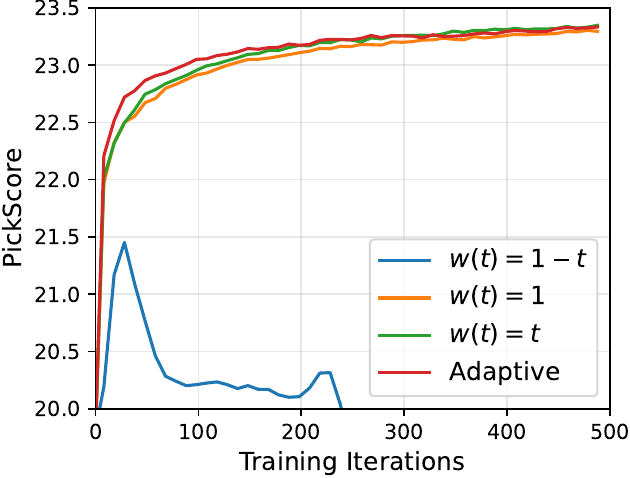}\\
            \small{(b)}
        \end{minipage}
        \vspace{-.05in}
        \captionof{figure}{\label{fig:ablation-weight} Different time-dependent weighting strategies.}
    \end{minipage}
    \hfill
    \begin{minipage}[b]{.32\linewidth}
        \centering
        \includegraphics[width=\linewidth]{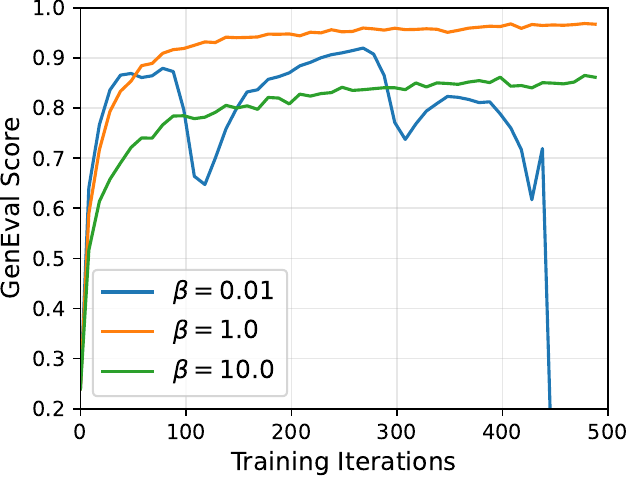}
        \vspace{-.05in}
        \captionof{figure}{\label{fig:ablation-beta} \footnotesize Choices of strength $\beta$.}
    \end{minipage}
\vspace{-.05in}
\end{figure}
We analyze the impact of our core design choices:

\textbf{Negative Loss.} The negative-aware component is crucial in DiffusionNFT. Without the negative policy loss on $\vv_\theta^-$, we find rewards collapse almost instantly during online training, highlighting the essential role of negative signals in diffusion RL. This phenomenon is divergent from observations in LLMs, where RFT remains a strong baseline~\citep{xiong2025minimalist,nft2}.

\textbf{Diffusion Sampler.} Online samples in DiffusionNFT are used both for reward evaluation and as training data, making quality critical. Figure~\ref{fig:ablation-sampler} shows that ODE samplers outperform SDE ones, especially on \texttt{PickScore}, which is noise-sensitive. Second-order ODE slightly outperforms first-order on \texttt{GenEval}, while being comparable on \texttt{PickScore}.

\textbf{Adaptive Weighting.} We find stability improves when the flow-matching loss is given higher weight at larger $t$, whereas inverse strategies (e.g., $w(t)=1-t$) lead to collapse (Figure~\ref{fig:ablation-weight}). Our adaptive schedule consistently matches or exceeds heuristic choices.

\textbf{Soft Update.} We compare different $\eta_i$ schedules for the soft update in Figure~\ref{fig:ablation-ema}. Fully on-policy ($\eta_i=0$) accelerates early progress but destabilizes training, while overly off-policy ($\eta=0.9$) slows convergence. We find that starting with a small $\eta$ and gradually increasing it to a larger value in later stages strikes an effective balance between convergence speed and training stability.

\textbf{Guidance Strength.} As shown in Figure~\ref{fig:ablation-beta}, the guidance parameter $\beta$ also governs a trade-off between stability and convergence speed. We find that $\beta$ near 1 performs stably and select $\beta$ as 1 or 0.1 (for faster reward increase) in practice.
\section{Related Work}
\label{appendix:related}



The transition of RL algorithms from discrete autoregressive (AR) to continuous diffusion models poses a central challenge: the inherent difficulty of diffusion models for computing exact model likelihoods~\citep{song2021maximum}, which are nonetheless crucial for RL~\citep{sfbc,liu2025flow}. 
To address this challenge, existing efforts include:

\textit{Likelihood-free} methods: (1) Reward Backpropagation~\citep{xu2023imagereward,prabhudesai2023aligning,clark2023directly,prabhudesai2024video} proves highly effective, yet is limited to differentiable rewards and can only tune low-noise timesteps due to memory costs and gradient explosion when unrolling long denoising chains. (2) Reward-Weighted Regression (RWR)~\citep{lee2023aligning} is an offline finetuning method but lacks a negative policy objective to penalize low-reward generations. (3) Policy Guidance. This includes energy guidance \citep{diffuser, cep} and CFG-style guidance \citep{frans2025diffusion, jin2025inference}. These methods all require combining multiple models for guided sampling, thus complicating online optimization. (4) Score-based RL. These methods try to perform RL directly on the score rather than the likelihood field \citep{zhu2025dspo}.

\textit{Likelihood-based} methods: (1) Diffusion-DPO~\citep{wallace2024diffusion,yang2024using,liang2024step,yuan2024self, li2025divergence} adapts DPO to diffusion for paired human preference data but requires additional likelihood and loss approximations compared to AR; DDO~\citep{DDO} uses high-quality dataset as positive signals and self-generated samples as negative signals to avoid the requirement of paired data, achieving state-of-the-art CFG-free FIDs in visual generation, while still relying on likelihood approximation for the diffusion case. (2) Policy gradient methods, starting from PPO style~\citep{black2023training,fan2023dpok}, decompose trajectory likelihoods step by step without considering forward consistency. Recent GRPO extensions~\citep{liu2025flow,xue2025dancegrpo} prove effective and scalable for diffusion RL, but they couple the training loss with SDE samplers and face efficiency bottlenecks. MixGRPO~\citep{li2025mixgrpo} improves efficiency by mixing SDE and ODE, while issues of coupling and forward inconsistency remain.
\section{Conclusion}

We introduce Diffusion Negative-aware FineTuning (DiffusionNFT), a new paradigm for online reinforcement learning of diffusion models that directly operates on the forward process. By formulating policy improvement as a contrast between positive and negative generations, DiffusionNFT integrates reinforcement signals seamlessly into the standard diffusion objective, eliminating the reliance on likelihood estimation and SDE-based reverse process. Empirically, DiffusionNFT demonstrates strong and efficient reward optimization, achieving up to $25\times$ higher efficiency than FlowGRPO while producing a single model that outperforms CFG baselines across diverse in-domain and out-of-domain rewards. We believe this work represents a step toward unifying supervised and reinforcement learning in diffusion, and highlights the forward process as a promising foundation for scalable, efficient, and theoretically principled diffusion RL.

\subsubsection*{The Use of Large Language Models (LLMs)}
We used large language models (LLMs) solely as a writing assistant for language polishing and improving clarity of presentation. The LLMs were not involved in research ideation, methodological design, experimental execution, or result analysis. All scientific contributions and substantive writing were carried out by the authors.

\subsubsection*{Acknowledgments}
We thank Cheng Lu, Hanzi Mao, Zekun Hao, Tao Yang, Zhanhao Liang, Shuhuai Ren, Tenglong Ao, Xintao Wang, Haoqi Fan, Jiajun Liang, Yuji Wang, and Hongzhou Zhu for the valuable discussion.

\bibliography{iclr2026_conference}
\bibliographystyle{iclr2026_conference}

\appendix
\clearpage
\newpage
\section{Proof of Theorems}

\begin{lemma}[\textbf{Distribution Split}]
\label{lemma:priorsplit} Consider the distribution triplet $\pi^+$, $\pi^-$, and $\pi^\text{old}$, as defined in Section~\ref{sec:problem}:
\begin{equation}
\label{eq:positive_split}
    \pi^+(\vx_0|\vc) : = \pi^\text{old}(\vx_0|\mathbf{o}=1,\vc)  = \frac{p(\mathbf{o}=1|\vx_0,\vc) \pi^\text{old}(\vx_0|\vc)}{p_{\pi^\text{old}}(\mathbf{o}=1|\vc)} = \frac{r(\vx_0,\vc)}{p_{\pi^\text{old}}(\mathbf{o}=1|\vc)} \pi^\text{old}(\vx_0|\vc)
\end{equation}
\begin{equation}
\label{eq:negative_split}
    \pi^-(\vx_0|\vc) : = \pi^\text{old}(\vx_0|\mathbf{o}=0,\vc)  = \frac{p(\mathbf{o}=0|\vx_0,\vc) \pi^\text{old}(\vx_0|\vc)}{p_{\pi^\text{old}}(\mathbf{o}=0|\vc)} = \frac{1-r(\vx_0,\vc)}{1-p_{\pi^\text{old}}(\mathbf{o}=1|\vc)} \pi^\text{old}(\vx_0|\vc)
\end{equation}
$\pi^\text{old}(\vx_0|\vc)$ is as a linear combination between its positive split $\pi^+(\vx_0|\vc)$ and negative split $\pi^-(\vx_0|\vc)$:
\begin{equation}
\label{eq:mix_prior}
    \pi^\text{old}(\vx_0|\vc) = p_{\pi^\text{old}}(\mathbf{o}=1|\vc) \pi^+(\vx_0|\vc) + [1-p_{\pi^\text{old}}(\mathbf{o}=1|\vc)]\pi^-(\vx_0|\vc)
\end{equation}
\end{lemma}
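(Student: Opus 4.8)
\textbf{Proof proposal for Lemma~\ref{lemma:priorsplit}.}
The plan is to obtain Eq.~\eqref{eq:mix_prior} as a direct instance of the law of total probability applied to the binary optimality variable $\mathbf{o}\in\{0,1\}$, using the definitions of $\pi^+$ and $\pi^-$ as the conditionals $\pi^\text{old}(\vx_0\mid\mathbf{o}=1,\vc)$ and $\pi^\text{old}(\vx_0\mid\mathbf{o}=0,\vc)$. First I would write $\pi^\text{old}(\vx_0\mid\vc)$ as the marginal of the joint $\pi^\text{old}(\vx_0,\mathbf{o}\mid\vc)$ over $\mathbf{o}$, i.e.\ $\pi^\text{old}(\vx_0\mid\vc)=\pi^\text{old}(\vx_0,\mathbf{o}=1\mid\vc)+\pi^\text{old}(\vx_0,\mathbf{o}=0\mid\vc)$. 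Then I would factor each joint term via the chain rule, $\pi^\text{old}(\vx_0,\mathbf{o}=1\mid\vc)=\pi^\text{old}(\vx_0\mid\mathbf{o}=1,\vc)\,p_{\pi^\text{old}}(\mathbf{o}=1\mid\vc)=\pi^+(\vx_0\mid\vc)\,p_{\pi^\text{old}}(\mathbf{o}=1\mid\vc)$, and symmetrically $\pi^\text{old}(\vx_0,\mathbf{o}=0\mid\vc)=\pi^-(\vx_0\mid\vc)\,[1-p_{\pi^\text{old}}(\mathbf{o}=1\mid\vc)]$ using $p_{\pi^\text{old}}(\mathbf{o}=0\mid\vc)=1-p_{\pi^\text{old}}(\mathbf{o}=1\mid\vc)$. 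Substituting these two identities into the marginalization gives Eq.~\eqref{eq:mix_prior} immediately.

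As a second, essentially bookkeeping step, I would confirm the intermediate equalities in Eqs.~\eqref{eq:positive_split} and~\eqref{eq:negative_split} that the lemma restates: the middle expressions are Bayes' rule, $\pi^\text{old}(\vx_0\mid\mathbf{o}=1,\vc)=p(\mathbf{o}=1\mid\vx_0,\vc)\pi^\text{old}(\vx_0\mid\vc)/p_{\pi^\text{old}}(\mathbf{o}=1\mid\vc)$, and the rightmost expressions follow from the modeling choice $p(\mathbf{o}=1\mid\vx_0,\vc)=r(\vx_0,\vc)$ (hence $p(\mathbf{o}=0\mid\vx_0,\vc)=1-r(\vx_0,\vc)$) introduced in Section~\ref{sec:problem}, together with $p_{\pi^\text{old}}(\mathbf{o}=1\mid\vc)=\E_{\pi^\text{old}(\vx_0\mid\vc)}r(\vx_0,\vc)$ obtained by marginalizing $\vx_0$ out of the same joint.

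The only point requiring a word of care—and the closest thing to an obstacle, though it is minor—is well-definedness of the conditionals: the denominators $p_{\pi^\text{old}}(\mathbf{o}=1\mid\vc)$ and $1-p_{\pi^\text{old}}(\mathbf{o}=1\mid\vc)$ must be strictly positive, i.e.\ the reward $r(\cdot,\vc)$ is neither identically $0$ nor identically $1$ under $\pi^\text{old}(\cdot\mid\vc)$. I would note this as a standing nondegeneracy assumption (in the practical algorithm it is enforced by the reward normalization and clipping, which keep $r\in(0,1)$ for a nontrivial group); under it, all quantities above are well defined and the computation goes through verbatim. No convergence, measure-theoretic, or approximation issues arise, so the proof is complete once the two factorization steps are written out.
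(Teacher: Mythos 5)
Your argument is correct and is exactly the one the paper intends (the paper's proof is a one-liner stating the result follows directly from the definitions); your version simply writes out the law-of-total-probability and chain-rule steps explicitly, plus a reasonable nondegeneracy caveat. No substantive difference.
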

\begin{proof}
The result follows directly from Eq.\eqref{eq:positive_split} and Eq.\eqref{eq:negative_split}.
\end{proof}

\begin{lemma}[\textbf{Posterior Split}]
\label{lemma:PosteriorSplit} The diffusion posteriors for distribution triplet $\pi^+$, $\pi^-$, and $\pi^\text{old}$ satisfy:
\begin{equation*}
    \pi^\text{old}(\vx_0|\vx_t,\vc) =  \alpha(\vx_t) \pi^+(\vx_0|\vx_t,\vc) + [1- \alpha(\vx_t)]\pi^-(\vx_0|\vx_t,\vc)
\end{equation*}
\begin{equation*}
\text{where} \quad\quad  \alpha(\vx_t) := \frac{\pi^+_t(\vx_t|\vc)}{\pi^\text{old}_t(\vx_t|\vc)} \E_{\pi^\text{old}(\vx_0|\vc)} r(\vx_0,\vc)\end{equation*}
\end{lemma}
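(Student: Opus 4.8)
The plan is to obtain the claimed posterior mixture by applying Bayes' rule twice, exploiting the single crucial fact that the forward noising kernel $\pi_{t|0}(\vx_t|\vx_0)=\gN(\alpha_t\vx_0,\sigma_t^2\rmI)$ is shared by all three policies $\pi^+$, $\pi^-$, and $\pi^\text{old}$: it depends only on $\vx_0$, not on which distribution $\vx_0$ was drawn from. This is what lets Bayes be ``undone'' term-by-term after inserting the prior decomposition.

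First I would write the $\pi^\text{old}$ posterior via Bayes, $\pi^\text{old}(\vx_0|\vx_t,\vc)=\pi_{t|0}(\vx_t|\vx_0)\,\pi^\text{old}(\vx_0|\vc)/\pi^\text{old}_t(\vx_t|\vc)$, and substitute the prior mixture from Lemma~\ref{lemma:priorsplit}, namely $\pi^\text{old}(\vx_0|\vc)=p_{\pi^\text{old}}(\mathbf{o}=1|\vc)\,\pi^+(\vx_0|\vc)+[1-p_{\pi^\text{old}}(\mathbf{o}=1|\vc)]\,\pi^-(\vx_0|\vc)$. For each of the two resulting terms I would then recognize, using the shared kernel again, that $\pi_{t|0}(\vx_t|\vx_0)\,\pi^+(\vx_0|\vc)=\pi^+_t(\vx_t|\vc)\,\pi^+(\vx_0|\vx_t,\vc)$ and similarly for $\pi^-$. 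This gives
\[
\pi^\text{old}(\vx_0|\vx_t,\vc)=\frac{p_{\pi^\text{old}}(\mathbf{o}=1|\vc)\,\pi^+_t(\vx_t|\vc)}{\pi^\text{old}_t(\vx_t|\vc)}\,\pi^+(\vx_0|\vx_t,\vc)+\frac{[1-p_{\pi^\text{old}}(\mathbf{o}=1|\vc)]\,\pi^-_t(\vx_t|\vc)}{\pi^\text{old}_t(\vx_t|\vc)}\,\pi^-(\vx_0|\vx_t,\vc),
\]
and since $p_{\pi^\text{old}}(\mathbf{o}=1|\vc)=\E_{\pi^\text{old}(\vx_0|\vc)}r(\vx_0,\vc)$ by definition of the reward as optimality probability, the first coefficient is exactly $\alpha(\vx_t)$ as stated.

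It then remains to check that the second coefficient equals $1-\alpha(\vx_t)$, and that $0\le\alpha(\vx_t)\le1$. For the former I would marginalize the prior mixture over $\vx_0$ against $\pi_{t|0}(\vx_t|\vx_0)$, obtaining $\pi^\text{old}_t(\vx_t|\vc)=p_{\pi^\text{old}}(\mathbf{o}=1|\vc)\,\pi^+_t(\vx_t|\vc)+[1-p_{\pi^\text{old}}(\mathbf{o}=1|\vc)]\,\pi^-_t(\vx_t|\vc)$; dividing by $\pi^\text{old}_t(\vx_t|\vc)$ shows the two coefficients sum to $1$. Nonnegativity of both coefficients (each a ratio of densities times a probability) together with their summing to one yields the bound on $\alpha$.

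I do not anticipate a genuine obstacle: the argument is pure bookkeeping with Bayes' rule. The only point that must be handled with care is the shared-kernel observation in the very first step, since without it the three posteriors could not be related by a convex combination; everything downstream is routine.
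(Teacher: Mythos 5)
Your proof is correct and takes essentially the same route as the paper: both apply Bayes' rule with the shared forward kernel, substitute the prior mixture from the Distribution Split lemma, read off the posterior mixture, and establish that the two coefficients sum to one by marginalizing the prior mixture against the noising kernel (which the paper phrases as ``diffusing both sides''). The only cosmetic difference is that you write Bayes' rule directly for the posterior while the paper writes it solved for the prior and substitutes all three distributions at once; the underlying bookkeeping is identical.
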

\begin{proof}
    Leveraging Bayes' Rule:
    \begin{equation*}
        \pi^\text{old}(\vx_0|\vc) = \frac{\pi^\text{old}_t(\vx_t|\vc)\pi^\text{old}_{0|t}(\vx_0|\vx_t,\vc) }{\pi(\vx_t|\vx_0)}
    \end{equation*}
    Replacing all distributions in Eq.~\eqref{eq:mix_prior} (Lemma \ref{lemma:priorsplit}) we get
    \begin{equation*}
    \begin{aligned}
    \frac{\pi^\text{old}_t(\vx_t|\vc)\pi^\text{old}_{0|t}(\vx_0|\vx_t,\vc) }{\pi(\vx_t|\vx_0)} =& p_{\pi^\text{old}}(\mathbf{o}=1|\vc) \frac{\pi^+_t(\vx_t|\vc)\pi^+_{0|t}(\vx_0|\vx_t,\vc) }{\pi(\vx_t|\vx_0)} \\
        &+ [1-p_{\pi^\text{old}}(\mathbf{o}=1|\vc)]\frac{\pi^-_t(\vx_t|\vc)\pi^-_{0|t}(\vx_0|\vx_t,\vc) }{\pi(\vx_t|\vx_0)}\\
        \Rightarrow\pi^\text{old}_{0|t}(\vx_0|\vx_t,\vc)  =& p_{\pi^\text{old}}(\mathbf{o}=1|\vc) \frac{\pi^+_{t}(\vx_t|\vc)}{\pi^\text{old}_t(\vx_t|\vc)}\pi^+_{0|t}(\vx_0|\vx_t,\vc)  \\
        &+ [1-p_{\pi^\text{old}}(\mathbf{o}=1|\vc)]\frac{\pi^-_t(\vx_t|\vc)}{\pi^\text{old}_t(\vx_t|\vc)}\pi^-_{0|t}(\vx_0|\vx_t,\vc) 
    \end{aligned}
    \end{equation*}
    Diffuse both sides of Eq.~\eqref{eq:mix_prior}, we have
    \begin{equation*}
    \pi^\text{old}_t(\vx_t|\vc) = p_{\pi^\text{old}}(\mathbf{o}=1|\vc) \pi^+_t(\vx_t|\vc) + [1-p_{\pi^\text{old}}(\mathbf{o}=1|\vc)]\pi^-_t(\vx_t|\vc)
    \end{equation*}
    \begin{equation*}
    p_{\pi^\text{old}}(\mathbf{o}=1|\vc) \frac{\pi^+_t(\vx_t|\vc)}{\pi^\text{old}_t(\vx_t|\vc)} + [1-p_{\pi^\text{old}}(\mathbf{o}=1|\vc)]\frac{\pi^-_t(\vx_t|\vc)}{\pi^\text{old}_t(\vx_t|\vc)} = 1
    \end{equation*}
Note that 
\begin{equation*}
    p_{\pi^\text{old}}(\mathbf{o}=1|\vc) = \E_{\pi^\text{old}(\vx_0|\vc)} r(\vx_0,\vc)
\end{equation*}
We have 
\begin{equation*}
\pi^\text{old}_{0|t}(\vx_0|\vx_t,\vc)  = \alpha(\vx_t) \pi^+_{0|t}(\vx_0|\vx_t,\vc)  + [1-\alpha(\vx_t)]\pi^-_{0|t}(\vx_0|\vx_t,\vc) 
\end{equation*}
\end{proof}

\begin{theorem}[\textbf{Improvement Direction}]\label{thrm:0} Consider diffusion models $\vv^+$, $\vv^-$, and $\vv^\text{old}$  for the distribution triplet $\pi^+$, $\pi^-$, and $\pi^\text{old}$. The directional differences between these models are parallel:
\begin{align*}
\quad\quad\quad\quad\quad\quad\quad\quad\quad\Delta := & [1-\alpha(\vx_t)]  \;[\vv^\text{old}(\vx_t, \vc, t) - \vv^-(\vx_t, \vc, t)]\quad \text{(Reinforcement Guidance)}\nonumber\\
=& \quad\alpha(\vx_t)\quad \;  \;[\vv^+(\vx_t, \vc, t) - \vv^\text{old}(\vx_t, \vc, t)].
\end{align*}
where $0\leq\alpha(\vx_t)\leq1$ is a scalar coefficient:
\begin{equation*}
\alpha(\vx_t) := \frac{\pi^+_t(\vx_t|\vc)}{\pi^\text{old}_t(\vx_t|\vc)} \E_{\pi^\text{old}(\vx_0|\vc)} r(\vx_0,\vc)\end{equation*}
\end{theorem}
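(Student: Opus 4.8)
The plan is to derive both expressions for $\Delta$ from the posterior decomposition in Lemma~\ref{lemma:PosteriorSplit}, using the fact that the velocity field of a diffusion model is (up to schedule-dependent but distribution-independent coefficients) a conditional expectation of the velocity target $\vv=\dot\alpha_t\vx_0+\dot\sigma_t\epsilonv$ under the corresponding diffusion posterior. Concretely, for any of the three policies $\pi\in\{\pi^+,\pi^-,\pi^\text{old}\}$, the optimal velocity predictor minimizing the diffusion loss~\eqref{eq:diffusion_loss} satisfies $\vv(\vx_t,\vc,t)=\E_{\pi(\vx_0|\vx_t,\vc)}[\dot\alpha_t\vx_0+\dot\sigma_t\epsilonv\mid\vx_t]$, and since $\epsilonv=(\vx_t-\alpha_t\vx_0)/\sigma_t$ is a deterministic affine function of $\vx_0$ given $\vx_t$, this reduces to an affine function of $\E_{\pi(\vx_0|\vx_t,\vc)}[\vx_0\mid\vx_t]$, i.e. $\vv(\vx_t,\vc,t)=A_t\,\E_{\pi(\vx_0|\vx_t,\vc)}[\vx_0]+B_t\,\vx_t$ with $A_t,B_t$ depending only on the schedule, not on $\pi$.

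First I would record this affine relation explicitly and observe that the additive $B_t\vx_t$ term cancels in any difference $\vv-\vv'$ of two velocity predictors evaluated at the same $\vx_t$, so $\vv(\vx_t,\vc,t)-\vv'(\vx_t,\vc,t)=A_t\big(\E_{\pi}[\vx_0\mid\vx_t]-\E_{\pi'}[\vx_0\mid\vx_t]\big)$. Next I would take the conditional expectation of both sides of the identity from Lemma~\ref{lemma:PosteriorSplit},
\begin{equation*}
\pi^\text{old}_{0|t}(\vx_0|\vx_t,\vc)=\alpha(\vx_t)\,\pi^+_{0|t}(\vx_0|\vx_t,\vc)+[1-\alpha(\vx_t)]\,\pi^-_{0|t}(\vx_0|\vx_t,\vc),
\end{equation*}
against $\vx_0$, yielding $\E_{\pi^\text{old}}[\vx_0\mid\vx_t]=\alpha(\vx_t)\,\E_{\pi^+}[\vx_0\mid\vx_t]+[1-\alpha(\vx_t)]\,\E_{\pi^-}[\vx_0\mid\vx_t]$. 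Multiplying by $A_t$ and rearranging then gives the chain of identities: $A_t\big(\E_{\pi^+}[\vx_0\mid\vx_t]-\E_{\pi^\text{old}}[\vx_0\mid\vx_t]\big)$ scaled by $\alpha(\vx_t)$ equals $A_t\big(\E_{\pi^\text{old}}[\vx_0\mid\vx_t]-\E_{\pi^-}[\vx_0\mid\vx_t]\big)$ scaled by $1-\alpha(\vx_t)$, which is exactly the two claimed forms of $\Delta$ after substituting the velocity-difference formula. Finally, $0\le\alpha(\vx_t)\le1$ follows immediately from the displayed normalization identity $p_{\pi^\text{old}}(\mathbf{o}=1|\vc)\,\pi^+_t(\vx_t|\vc)/\pi^\text{old}_t(\vx_t|\vc)+[1-p_{\pi^\text{old}}(\mathbf{o}=1|\vc)]\,\pi^-_t(\vx_t|\vc)/\pi^\text{old}_t(\vx_t|\vc)=1$ established inside the proof of Lemma~\ref{lemma:PosteriorSplit}, since both summands are nonnegative, so the first summand, which is $\alpha(\vx_t)$, lies in $[0,1]$.

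The main obstacle is making precise the claim that the velocity predictors $\vv^+,\vv^-,\vv^\text{old}$ "for" the three policies are exactly the posterior-mean velocity fields — i.e. pinning down that we are comparing the \emph{optimal} (population) solutions of the respective diffusion objectives, so that the clean affine-in-posterior-mean representation holds and the schedule coefficients $A_t,B_t$ are genuinely shared across the triplet. Once that identification is granted, the rest is a one-line linearity argument; there is no delicate estimate. A secondary minor point to handle carefully is that all three posteriors share the same forward kernel $\pi(\vx_t|\vx_0)=\gN(\alpha_t\vx_0,\sigma_t^2\rmI)$, which is what makes $A_t,B_t$ policy-independent and is implicitly used when Lemma~\ref{lemma:PosteriorSplit} is invoked; I would state this assumption explicitly at the start of the proof.
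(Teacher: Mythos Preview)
Your proposal is correct and follows essentially the same route as the paper: write each velocity as an affine function $a_t\vx_t+b_t\E_\pi[\vx_0\mid\vx_t]$ of the posterior mean with shared schedule coefficients, apply Lemma~\ref{lemma:PosteriorSplit} to get $\vv^\text{old}=\alpha(\vx_t)\vv^++[1-\alpha(\vx_t)]\vv^-$, and rearrange. The only addition on your side is the explicit verification of $\alpha(\vx_t)\in[0,1]$, which the paper leaves implicit in the proof of Lemma~\ref{lemma:PosteriorSplit}.
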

\begin{proof}
    According to the relationship between the optimal velocity predictor and the posterior mean of $\vx_0$ (i.e., the optimal $\vx_0$ predictor)~\citep{zheng2023improved}:
    \begin{equation*}
        \vv^\text{old}(\vx_t, \vc, t) = a_t\vx_t + b_t\E_{\pi^\text{old}(\vx_0|\vx_t,\vc)} [\vx_0]
    \end{equation*}
    \begin{equation*}
        \vv^+(\vx_t, \vc, t) = a_t\vx_t + b_t \E_{\pi^+(\vx_0|\vx_t,\vc)} [\vx_0]
    \end{equation*}
    \begin{equation*}
        \vv^-(\vx_t, \vc, t) = a_t\vx_t + b_t\E_{\pi^=(\vx_0|\vx_t,\vc)} [\vx_0]
    \end{equation*}
    where $a_t=\frac{\dot\sigma_t}{\sigma_t},b_t=\dot\alpha_t-\frac{\dot\sigma_t\alpha_t}{\sigma_t}$. Based on Lemma \ref{lemma:PosteriorSplit} we have
    \begin{equation*}
        \vv^\text{old}(\vx_t, \vc, t)  = \alpha(\vx_t) \vv^+(\vx_t, \vc, t)  + [1-\alpha(\vx_t)]\vv^-(\vx_t, \vc, t)
    \end{equation*}
    Rearranging the equation, we complete the proof.
\end{proof}


\begin{theorem}[\textbf{Reinforcement Guidance Optimization}]\label{theorem:1} Consider the training objective:
\begin{equation}
\label{Eq:GFT_diffusion_loss_all2}
\mathcal{L}(\theta) =  \E_{\vc, \pi^\text{old}(\vx_0|\vc), t}\;r
    \| \vv_\theta^+(\vx_t, \vc, t) - \vv\|_2^2 + (1-r) 
    \| \vv_\theta^-(\vx_t, \vc, t) - \vv\|_2^2,
\end{equation}
\begin{equation*}
        \quad\quad\text{where} \quad \vv_\theta^+(\vx_t, \vc, t) := (1-\beta) \vv^\text{old}(\vx_t, \vc, t)+\beta \vv_\theta(\vx_t, \vc, t), \quad \text{(Implicit positive policy)}
\end{equation*}
\begin{equation*}
    \;\;\quad\quad\text{and}\;\; \quad \vv_\theta^-(\vx_t, \vc, t) := (1+\beta) \vv^\text{old}(\vx_t, \vc, t)-\beta \vv_\theta(\vx_t, \vc, t). \quad\text{(Implicit negative policy)}
\end{equation*}
Given unlimited data and model capacity, the optimal solution of Eq.~\eqref{Eq:GFT_diffusion_loss_all2} satisfies
\begin{equation*}
    \vv_{\theta^*}(\vx_t, \vc, t) = \vv^\text{old}(\vx_t, \vc, t) + \frac{2}{\beta} \Delta(\vx_t, \vc, t).
\end{equation*}
\end{theorem}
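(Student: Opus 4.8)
The plan is to minimize $\mathcal{L}(\theta)$ pointwise, exploiting the unlimited-capacity assumption: for each fixed $(\vx_t,\vc,t)$ I treat $\vu := \vv_\theta(\vx_t,\vc,t)\in\mathbb{R}^n$ as a free vector. I rewrite the expectation in Eq.~\eqref{Eq:GFT_diffusion_loss_all2} by first drawing $\vx_t$ from the marginal $\pi^\text{old}_t(\cdot|\vc)$ and then $\vx_0$ from the diffusion posterior $\pi^\text{old}(\vx_0|\vx_t,\vc)$; the injected noise is then $\epsilonv=(\vx_t-\alpha_t\vx_0)/\sigma_t$, so the target velocity is the affine function $\vv = a_t\vx_t + b_t\vx_0$ with $a_t=\dot\sigma_t/\sigma_t$, $b_t=\dot\alpha_t-\dot\sigma_t\alpha_t/\sigma_t$, exactly as in Theorem~\ref{thrm:0}. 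Conditioned on $\vx_t$, both $\vv^\text{old}:=\vv^\text{old}(\vx_t,\vc,t)$ and $\vu$ are constants, so after substituting the implicit parameterizations the inner objective
\[
f(\vu) = \E_{\pi^\text{old}(\vx_0|\vx_t,\vc)}\!\big[\, r\,\|(1-\beta)\vv^\text{old} + \beta\vu - \vv\|_2^2 + (1-r)\,\|(1+\beta)\vv^\text{old} - \beta\vu - \vv\|_2^2 \,\big]
\]
is a strictly convex quadratic in $\vu$ (Hessian $2\beta^2\rmI$), so its unique minimizer is characterized by $\nabla_\vu f=\vzero$.

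Next I expand the gradient condition: dividing by $2\beta$ and collecting the $\vu$, $\vv^\text{old}$, and $\vv$ terms, the stationarity equation becomes
\[
\beta\vu + \big(2\,\E[r] - 1 - \beta\big)\vv^\text{old} + \E\big[(1-2r)\,\vv\big] = \vzero,
\]
where every expectation is over $\pi^\text{old}(\vx_0|\vx_t,\vc)$. To evaluate the remaining expectations I use two facts. First, $\E_{\pi^\text{old}(\vx_0|\vx_t,\vc)}[\vv] = a_t\vx_t + b_t\,\E[\vx_0] = \vv^\text{old}$ by the posterior-mean characterization of the optimal velocity predictor used in the proof of Theorem~\ref{thrm:0}. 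Second, since $\mathbf{o}$ is conditionally independent of $\vx_t$ given $\vx_0$, Bayes' rule gives the conditional analogue of Eq.~\eqref{eq:positive_split}, namely $\pi^+(\vx_0|\vx_t,\vc) = r(\vx_0,\vc)\,\pi^\text{old}(\vx_0|\vx_t,\vc)/\E_{\pi^\text{old}(\vx_0|\vx_t,\vc)}[r]$; reweighting by this and applying the same predictor identity yields $\E_{\pi^\text{old}(\vx_0|\vx_t,\vc)}[r\,\vv] = \E_{\pi^\text{old}(\vx_0|\vx_t,\vc)}[r]\cdot\vv^+$. Finally, unrolling the definitions of $\pi^+_t$ and $\alpha$ shows $\E_{\pi^\text{old}(\vx_0|\vx_t,\vc)}[r(\vx_0,\vc)] = \frac{\pi^+_t(\vx_t|\vc)}{\pi^\text{old}_t(\vx_t|\vc)}\,\E_{\pi^\text{old}(\vx_0|\vc)}[r] = \alpha(\vx_t)$ — this is just $p_{\pi^\text{old}}(\mathbf{o}=1|\vx_t,\vc)$, consistent with Lemma~\ref{lemma:PosteriorSplit}.

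Substituting $\E[(1-2r)\vv] = \vv^\text{old} - 2\alpha(\vx_t)\vv^+$ and $\E[r]=\alpha(\vx_t)$ into the stationarity equation, the $\vv^\text{old}$ coefficients collapse and I obtain $\beta\vu = \beta\vv^\text{old} + 2\alpha(\vx_t)\big(\vv^+ - \vv^\text{old}\big)$, i.e.
\[
\vv_{\theta^*}(\vx_t,\vc,t) = \vv^\text{old}(\vx_t,\vc,t) + \tfrac{2\alpha(\vx_t)}{\beta}\big(\vv^+(\vx_t,\vc,t) - \vv^\text{old}(\vx_t,\vc,t)\big),
\]
and Theorem~\ref{thrm:0} identifies $\alpha(\vx_t)[\vv^+-\vv^\text{old}] = \Delta(\vx_t,\vc,t)$, giving the claim $\vv_{\theta^*} = \vv^\text{old} + \tfrac{2}{\beta}\Delta$. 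I expect the main obstacle to be the bookkeeping in the previous paragraph — specifically, correctly reweighting the diffusion posterior to turn $\E_{\pi^\text{old}(\vx_0|\vx_t,\vc)}[r\,\vv]$ into $\alpha(\vx_t)\vv^+$, where the forward-process structure (closed-form posterior, conditional independence of $\mathbf{o}$ from $\vx_t$, affine velocity target) does all the work; the quadratic minimization itself is routine.
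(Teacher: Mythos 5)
Your proof is correct. It reaches the same conclusion via a slightly different execution than the paper's: the paper first changes measure to rewrite the loss as $\alpha(\vx_t)\,\E_{\pi^+_{0|t}}\|\vv_\theta^+-\vv\|^2 + [1-\alpha(\vx_t)]\,\E_{\pi^-_{0|t}}\|\vv_\theta^--\vv\|^2$, then completes the square twice to identify the componentwise targets $\vv^+,\vv^-$, and completes the square once more to merge the $\alpha$-weighted mixture into a single $\|\vv_\theta - (\vv^\text{old}+\tfrac{2}{\beta}\Delta)\|^2$. You instead observe that the inner objective is a strictly convex quadratic in the free vector $\vu = \vv_\theta(\vx_t,\vc,t)$, set its gradient to zero, and evaluate the three conditional moments $\E[\vv]=\vv^\text{old}$, $\E[r]=\alpha(\vx_t)$, $\E[r\vv]=\alpha(\vx_t)\vv^+$ by the same reweighting identity $r(\vx_0,\vc)\,\pi^\text{old}_{0|t}(\vx_0|\vx_t,\vc)=\alpha(\vx_t)\,\pi^+_{0|t}(\vx_0|\vx_t,\vc)$ that underlies the paper's Lemma~\ref{lemma:PosteriorSplit}. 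Both routes rely on exactly the same ingredients — the affine form $\vv=a_t\vx_t+b_t\vx_0$, the posterior-mean characterization of the optimal velocity predictor, and the posterior split — so this is the same argument organized differently; the stationarity-plus-moment-matching route buys a somewhat shorter algebraic path (one linear solve in place of three square-completions), while the paper's presentation makes the contrastive $\alpha/(1-\alpha)$ mixture structure and the role of the implicit parameterization more visible. Your aside that $\E_{\pi^\text{old}_{0|t}}[r]=p_{\pi^\text{old}}(\mathbf{o}=1|\vx_t,\vc)=\alpha(\vx_t)$ is also correct and is a clean sanity check on the definition of $\alpha$.
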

\begin{proof}
\begin{align*}
\mathcal{L}(\theta) &= \E_{\vc,t, \pi^\text{old}_t(\vx_t|\vc)\pi^\text{old}_{0|t}(\vx_0|\vx_t,\vc)}\;r(\vx_0,\vc)
    \| \vv_\theta^+(\vx_t, \vc, t) - \vv\|_2^2 + [1-r(\vx_0,\vc)] 
    \| \vv_\theta^-(\vx_t, \vc, t) - \vv\|_2^2 \\
    &= \E_{\vc.t, \pi^\text{old}_t(\vx_t|\vc)}\{\E_{\pi^\text{old}_{0|t}(\vx_0|\vx_t,\vc)}r(\vx_0,\vc)
    \| \vv_\theta^+(\vx_t, \vc, t) - \vv\|_2^2 \\
    &+ \E_{\pi^\text{old}_{0|t}(\vx_0|\vx_t,\vc)}[1-r(\vx_0,\vc)]  
    \| \vv_\theta^-(\vx_t, \vc, t) - \vv\|_2^2\} 
\end{align*}
From Lemma~\ref{lemma:priorsplit} we have $r(\vx_0,\vc)\pi^\text{old}(\vx_0|\vc)=p_{\pi^\text{old}}(\mathbf{o}=1|\vc)\pi^+(\vx_0|\vc)$, therefore:
\begin{align*}
    r(\vx_0,\vc) \pi^\text{old}_{0|t}(\vx_0|\vx_t,\vc) &= r(\vx_0,\vc) \frac{\pi^\text{old}(\vx_0|\vc)\pi(\vx_t|\vx_0)}{\pi^\text{old}_t(\vx_t|\vc)}\\
    &=  p_{\pi^\text{old}}(\mathbf{o}=1|\vc) \frac{\pi^+_t(\vx_t|\vc)}{\pi^\text{old}_t(\vx_t|\vc)}\frac{\pi^+(\vx_0|\vc)\pi(\vx_t|\vx_0)}{\pi^+_t(\vx_t|\vc)} \\
    &=  p_{\pi^\text{old}}(\mathbf{o}=1|\vc) \frac{\pi^+_t(\vx_t|\vc)}{\pi^\text{old}_t(\vx_t|\vc)} \pi^+_{0|t}(\vx_0|\vx_t,\vc) \\
    &=  \alpha(\vx_t)\pi^+_{0|t}(\vx_0|\vx_t,\vc) \\
\end{align*}
Similarly,
\begin{align*}
    [1-r(\vx_0,\vc)] \pi^\text{old}_{0|t}(\vx_0|\vx_t,\vc) = [1- \alpha(\vx_t)]\pi^-_{0|t}(\vx_0|\vx_t,\vc) \\
\end{align*}
Then, 
\begin{align*}
\mathcal{L}(\theta) =& \E_{\vc,t, \pi^\text{old}_t(\vx_t|\vc)}\{\alpha(\vx_t)\E_{\pi^+_{0|t}(\vx_0|\vx_t,\vc)}
    \| \vv_\theta^+(\vx_t, \vc, t) - \vv\|_2^2 \\
    &+ [1-\alpha(\vx_t)]\E_{\pi^-_{0|t}(\vx_0|\vx_t,\vc)} 
    \| \vv_\theta^-(\vx_t, \vc, t) - \vv\|_2^2\} \\
    =& \E_{\vc,t, \pi^\text{old}_t(\vx_t|\vc)}\{\alpha(\vx_t)
    \| \vv_\theta^+(\vx_t, \vc, t) - \E_{\pi^+_{0|t}(\vx_0|\vx_t,\vc)} [\vv]\|_2^2 \\
    &+ [1-\alpha(\vx_t)]
    \| \vv_\theta^-(\vx_t, \vc, t) -\E_{\pi^-_{0|t}(\vx_0|\vx_t,\vc)}  [\vv]\|_2^2\} + C_1 \\
    =& \E_{\vc,t, \pi^\text{old}_t(\vx_t|\vc)}\{\alpha(\vx_t)
    \| \vv_\theta^+(\vx_t, \vc, t) - \vv^+(\vx_t, \vc, t)\|_2^2 \\
    &+ [1-\alpha(\vx_t)]
    \| \vv_\theta^-(\vx_t, \vc, t) -\vv^-(\vx_t, \vc, t)\|_2^2\} + C_1
\end{align*}
Combining Theorem \ref{thrm:0}, we observe that
\begin{align*}
    \vv_\theta^+(\vx_t, \vc, t) - \vv^+(\vx_t, \vc, t) &= (1-\beta) \vv^\text{old}(\vx_t, \vc, t)+\beta \vv_\theta(\vx_t, \vc, t) - \vv^+(\vx_t, \vc, t)\\
    &= \beta [\vv_\theta - \vv^\text{old} - \frac{1}{\beta} \frac{\Delta}{\alpha(\vx_t)}]\\
    \vv_\theta^-(\vx_t, \vc, t) - \vv^-(\vx_t, \vc, t) &= (1+\beta) \vv^\text{old}(\vx_t, \vc, t)-\beta \vv_\theta(\vx_t, \vc, t) - \vv^-(\vx_t, \vc, t) \\
    &= -\beta [\vv_\theta - \vv^\text{old} - \frac{1}{\beta} \frac{\Delta}{1-\alpha(\vx_t)}]
\end{align*}
Substituting these results into $\mathcal{L}(\theta)$:
\begin{align*}
\mathcal{L}(\theta) =& \E_{\vc,t, \pi^\text{old}_t(\vx_t|\vc)}\{\alpha(\vx_t)\beta^2
    \|  \vv_\theta - \vv^\text{old} - \frac{1}{\beta} \frac{\Delta}{\alpha(\vx_t)}\|_2^2 \\
    &+ [1-\alpha(\vx_t)]\beta^2
    \| \vv_\theta - \vv^\text{old} - \frac{1}{\beta} \frac{\Delta}{1-\alpha(\vx_t)}\|_2^2\} + C_1\\
=& \beta^2\E_{\vc,t, \pi^\text{old}_t(\vx_t|\vc)}\{\alpha(\vx_t)
    \|  \vv_\theta - (\vv^\text{old}+ \frac{1}{\beta} \frac{\Delta}{\alpha(\vx_t)})\|_2^2 \\
    &+ [1-\alpha(\vx_t)]
    \| \vv_\theta - (\vv^\text{old} + \frac{1}{\beta} \frac{\Delta}{1-\alpha(\vx_t)})\|_2^2\} + C_1\\
=& \beta^2\E_{\vc,t, \pi^\text{old}(\vx_t|\vc)}
    \|  \vv_\theta - \alpha(\vx_t)(\vv^\text{old}+ \frac{1}{\beta} \frac{\Delta}{\alpha(\vx_t)}) - [1-\alpha(\vx_t)] (\vv^\text{old} + \frac{1}{\beta} \frac{\Delta}{1-\alpha(\vx_t)}) \|_2^2 + C_2\\
=& \beta^2\E_{\vc,t, \pi^\text{old}(\vx_t|\vc)}
    \|  \vv_\theta - (\vv^\text{old}+\frac{2}{\beta}\Delta) \|_2^2 + C_2
\end{align*}
from which it is obvious that the optimal $\theta^*$ satisfies $
\vv_{\theta^*}(\vx_t, \vc, t) = \vv^\text{old}(\vx_t, \vc, t) + \frac{2}{\beta}\Delta(\vx_t, \vc, t)
$.
\end{proof}
\section{Theoretical Discussions}
\subsection{Flow SDE}
\label{appendix:flowsde}
As flow models are a special case of diffusion models under the rectified schedule $\alpha_t=1-t,\sigma_t=t$, the earliest results on diffusion SDEs~\citep{song2020score} can be directly applied without difficulty. FlowGRPO~\citep{liu2025flow} and DanceGRPO~\citep{xue2025dancegrpo} derive the flow SDE with unexplained hyperparameters $g_t=a\sqrt{\tfrac{t}{1-t}}$ or additional complexity. We provide a simpler and more principled perspective based solely on the diffusion model framework.

To leverage the diffusion SDE formulation in \citet{song2020score}, we need to match its forward SDE
$\mathrm{d} \vx_t = f(t)\vx_t \mathrm{d} t + g(t)\mathrm{d}\vw_t$ with the forward transition kernel $\vx_t=\alpha_t\vx_0+\sigma_t\epsilonv$. As noted in the first two arXiv versions of the VDM paper~\citep{kingma2021variational}, $f(t),g(t)$ are related to $\alpha_t,\sigma_t$ by $f(t)=\frac{\mathrm{d}\log \alpha_t}{\mathrm{d} t}$, $g^2(t)=\frac{\mathrm{d} \sigma_t^2}{\mathrm{d} t}-2\frac{\mathrm{d}\log \alpha_t}{\mathrm{d} t}\sigma_t^2$. Setting $\alpha_t=1-t,\sigma_t=t$, we have
\begin{equation}
\label{eq:f-g-rf}
    f(t)=-\frac{1}{1-t},\quad g^2(t)=\frac{2t}{1-t}
\end{equation}
for rectified flow. According to~\citep{huang2021variational}, the generalized reverse SDE takes the form:

\begin{equation}
\label{eq:general-SDE}
    \mathrm{d}\vx_t = \left[f(t)\vx_t - \frac{1+\lambda_t^2}{2}g^2(t)\nabla_{\vx_t} \log \pi_t(\vx_t)\right] \mathrm{d} t + \lambda_tg(t)\mathrm{d} \bar{\vw}_t
\end{equation}
where $\lambda_t\in [0,1]$. Equivalently, it amounts to introducing Langevin dynamics on top of the diffusion ODE, with $\lambda_t=0$ corresponding to ODE, and $\lambda_t=1$ corresponding to the maximum variance SDE in \citet{song2020score}. The score function $\vs_\theta(\vx_t,t)\approx\nabla_{\vx_t} \log \pi_t(\vx_t)$, noise predictor $\epsilonv_\theta(\vx_t,t)$, data predictor $\vx_\theta(\vx_t,t)$ and velocity predictor $\vv_\theta(\vx_t,t)$ are interconvertible under general noise schedules~\citep{zheng2023improved}:
\begin{equation}
    \epsilonv_\theta(\vx_t,t)=-\sigma_t \vs_\theta(\vx_t,t),\quad \vx_\theta(\vx_t,t)=\frac{\vx_t-\sigma_t\epsilonv_\theta(\vx_t,t)}{\alpha_t},\quad \vv_\theta(\vx_t,t)=\dot\alpha_t\vx_\theta(\vx_t,t)+\dot\sigma_t\epsilonv_\theta(\vx_t,t)
\end{equation}
Applying these relations to the rectified flow schedule, we can derive:
\begin{equation}
\label{eq:score-v-relation}
    \vs_\theta(\vx_t,t)=-\frac{\vx_t+(1-t)\vv_\theta(\vx_t,t)}{t}
\end{equation}
Substituting Eq.~\eqref{eq:f-g-rf} and Eq.~\eqref{eq:score-v-relation} into Eq.~\eqref{eq:general-SDE}, we have the diffusion SDE under rectified flow:
\begin{equation}
    \mathrm{d}\vx_t=\left[(1+\lambda_t^2)\vv_\theta(\vx_t,t)+\frac{\lambda_t^2}{1-t}\vx_t\right]\mathrm{d} t+\lambda_t\sqrt{\frac{2t}{1-t}}\mathrm{d} \vw
\end{equation}
Therefore, the flow SDE in Eq.~\eqref{eq:flowgrpo-sde} is essentially conducting a transformation $g_t=\lambda_t\sqrt{\frac{2t}{1-t}}$ from the interpolation parameter $\lambda_t\in[0,1]$ to the variance parameter $g_t$. This also explains the choice $g_t=a\sqrt{\tfrac{t}{1-t}}$ in FlowGRPO, where $a=\sqrt{2}\lambda_t$ is a scaled version of $\lambda_t$, with $a=\sqrt{2}$ corresponding to the maximum variance SDE. In comparison, DanceGRPO adopts a fixed variance $g_t$ across timesteps, which is less effective on image models while more stable on video models.

FlowGRPO and DanceGRPO directly take the Euler discretization of the flow SDE. In principle, there are more accurate ways, such as utilizing the idea of diffusion implicit models~\citep{song2020denoising,zheng2024diffusion}, which is equivalent to the first-order discretization after applying exponential integrators~\citep{hochbruck2010exponential,zhang2022fast,gonzalez2023seeds}. Specifically, the sampling step from $t$ to $s<t$ can be derived as:
\begin{equation}
    \vx_s=\left[(1-s)+\sqrt{s^2-\rho_t^2}\right]\vx_t-\left[(1-s)t-\sqrt{s^2-\rho_t^2}(1-t)\right]\vv_\theta(\vx_t,t)+\rho_t\epsilonv,\quad \epsilonv\sim\gN(\bm 0,\rmI)
\end{equation}
where $\rho_t=\eta_t s\sqrt{1-\frac{s^2(1-t)^2}{t^2(1-s)^2}}$, and $\eta_t\in[0,1]$ interpolates between ODE and maximum variance SDE. Compared to the Euler discretization, the DDIM-style discretization avoids singularities at boundaries and is expected to reduce sampling errors. However, we did not observe notable advantages by replacing the SDE sampler with stochastic DDIM. Concurrent work~\citep{wang2025coefficients} improves the SDE sampler through the Coefficients-Preserving Sampling (CPS) principle.
\subsection{High-Order Flow ODE Sampler}
\label{appendix:flowode}
We implement the 2nd-order ODE sampler for flow models based on the DPM-Solver series~\citep{lu2022dpm,lu2022dpmpp,zheng2023dpm}, which uses the multistep method and half the log signal-to-noise ratio (SNR) $\lambda_t=\log(\alpha_t/\sigma_t)$ for time discretization. Specifically, for three consecutive timesteps $t_i<t_{i-1}<t_{i-2}$, where $\vx_{t_{i-1}},\vx_{t_{i-2}}$ are already obtained, the update rule for $\vx_{t_i}$ is:
\begin{equation}
    \vx_{t_i}=\frac{\sigma_{t_i}}{\sigma_{t_{i-1}}}\vx_{t_{i-1}}-\alpha_{t_i}(e^{-h_i}-1)\left[\left(1+\frac{1}{2r_i}\right)\vx_\theta(\vx_{t_{i-1}},t_{i-1})-\frac{1}{2r_i}\vx_\theta(\vx_{t_{i-2}},t_{i-2})\right]
\end{equation}
where $h_i=\lambda_{t_i}-\lambda_{t_{i-1}},r_i=\frac{h_{i-1}}{h_i}$, and the data predictor $\vx_\theta=\vx_t-t\vv_\theta$ for rectified flow. High-order solvers are also adopted in MixGRPO~\citep{li2025mixgrpo} but only for certain steps. Adopting the 2nd-order solver throughout the entire sampling process is infeasible, as $\lambda_t$ will be infinity at boundaries $t=0$ or $t=1$. Following common practices, the first and last steps degrade to the first-order solver, which is the default Euler discretization for flow models.
\subsection{Intuition behind the FlowGRPO Objective}
\label{appendix:intuition}
We provide some insight into reverse-process diffusion RL by inspecting the FlowGRPO objective in a sampler-agnostic manner. For any first-order SDE sampler, the reverse sampling step from $t$ to $s<t$ can be expressed as
\begin{equation}
    \vx_s=l(s,t)\vx_t-m(s,t)\vv_\theta(\vx_t,t)+n(s,t)\epsilonv,\quad \epsilonv\sim\mathcal{N}(\bf 0,\mathbf{I})
\end{equation}
where $l(s,t),m(s,t),n(s,t)$ depend only on $s,t$ and the sampler. Consider the on-policy case and the branching strategy in MixGRPO. Starting from a shared $\vx_t$, a group of $N$ noises $\epsilonv^{(1)},\dots,\epsilonv^{(N)}$ are sampled and incorporated into the reverse step to produce multiple samples $\vx_s^{(1)},\dots,\vx_s^{(N)}$. They go through further sampling, yielding $N$ clean samples and corresponding advantages $A^{(1)},\dots,A^{(N)}$. On-policy GRPO minimizes the negative advantage-weighted log likelihoods:
\begin{equation}
    \mathcal{L}(\theta)=-\frac{1}{N}\sum_{i=1}^N A^{(i)}\log p_\theta(\vx_s^{(i)}|\vx_t)
\end{equation}
where
\begin{equation}
\begin{aligned}
    \log p_\theta(\vx_s^{(i)}|\vx_t)&=-\frac{\|\vx_s^{(i)}-(l(s,t)\vx_t-m(s,t)\vv_\theta(\vx_t,t))\|_2^2}{2n^2(s,t)}+C\\
    &=-\frac{\|m(s,t)\vv_\theta(\vx_t,t)-m(s,t)\vv_{\texttt{sg}(\theta)}(\vx_t,t)+n(s,t)\epsilonv^{(i)}\|_2^2}{2n^2(s,t)}+C
\end{aligned}
\end{equation}
\texttt{sg} emerges because the samples $\vx_s^{(1)},\dots,\vx_s^{(N)}$ are gradient-free. The gradient of the reverse-step log likelihood w.r.t. $\theta$ can be surprisingly reduced to a simple form:
\begin{equation}
    \nabla_\theta \log p_\theta(\vx_s^{(i)}|\vx_t)=-\frac{m(s,t)}{n(s,t)}\nabla_\theta((\epsilonv^{(i)})^\top \vv_\theta(\vx_t,t))
\end{equation}
and
\begin{equation}
    \nabla_\theta\mathcal{L}(\theta)=\frac{m(s,t)}{n(s,t)}\nabla_\theta\left[\frac{1}{N}\sum_{i=1}^N(A^{(i)}\epsilonv^{(i)})^\top \vv_\theta(\vx_t,t)\right]
\end{equation}
Therefore, FlowGRPO essentially aligns the velocity field with the \textit{advantage-weighted noise}, while the choice of timesteps and sampler only influences the weighting $\frac{m(s,t)}{n(s,t)}$ across sampling steps. In the following, we show a further conclusion that FlowGRPO can be viewed as \textit{a gradient estimation of reward backpropagation}.

Denote $r_{t}(\x_t)$ as the implicit gradient-free function that solves the PF-ODE from $t$ to 0 and fetches the reward on the cleaned sample. The rewards can be expressed as
\begin{equation}
    r^{(i)}=r_{s}\left(l(s,t)\x_t-m(s,t)\vv_\theta(\x_t,t)+n(s,t)\epsilonv^{(i)}\right)
\end{equation}
According to Stein’s identity, we have
\begin{equation}
\begin{aligned}
    \frac{1}{N}\sum_{i=1}^N r^{(i)}\epsilonv^{(i)}&\approx \E_{\epsilonv\sim\Nc(\vect0,\Iv)}\left[r_{s}\left(l(s,t)\x_t-m(s,t)\vv_\theta(\x_t,t)+n(s,t)\epsilonv\right)\epsilonv\right]\\
    &=n(s,t)\E_{\epsilonv\sim\Nc(\vect0,\Iv)}\left[\nabla r_{s}\left(l(s,t)\x_t-m(s,t)\vv_\theta(\x_t,t)+n(s,t)\epsilonv\right)\right]
\end{aligned}
\end{equation}
Therefore,
\begin{equation}
\begin{aligned}
    &\nabla_\theta\left[\frac{1}{N}\sum_{i=1}^N(A^{(i)}\epsilonv^{(i)})^\top \vv_\theta(\x_t,t)\right]\\
    \approx&\frac{n(s,t)}{\sigma}\E_{\epsilonv\sim\Nc(\vect0,\Iv)}\left[\nabla r_{s}\left(l(s,t)\x_t-m(s,t)\vv_\theta(\x_t,t)+n(s,t)\epsilonv\right)\nabla_\theta \vv_\theta(\x_t,t)\right]\\
    =&-\frac{n(s,t)}{ m(s,t)\sigma}\E_{\epsilonv\sim\Nc(\vect0,\Iv)}\left[\nabla_\theta r_{s}\left(l(s,t)\x_t-m(s,t)\vv_\theta(\x_t,t)+n(s,t)\epsilonv\right)\right]
\end{aligned}
\end{equation}
where $\sigma$ is the global std used in GRPO normalization. Therefore, the GRPO loss gradient is
\begin{equation}
    \nabla_\theta\mathcal{L}(\theta)\approx -\frac{1}{\sigma}\E_{\epsilonv\sim\Nc(\vect0,\Iv)}\left[\nabla_\theta r_{s}\left(l(s,t)\x_t-m(s,t)\vv_\theta(\x_t,t)+n(s,t)\epsilonv\right)\right]
\end{equation}
From the above gradient, GRPO optimizes the reverse transition $t\rightarrow s$ when the remaining trajectory $s\rightarrow0$ is gradient-free. Compared to works like ReFL~\citep{xu2023imagereward}, which conduct direct gradient backpropagation and approximate $s\rightarrow0$ with a single forward pass ($\x_0$-prediction), GRPO introduces higher estimation variance but avoids backpropagation through the $s\rightarrow0$ process, allowing larger $s$ and a longer sampling chain for $s\rightarrow0$.
\section{Experiment Details}
\label{appendix:exp-details}
\textbf{Training Configurations.} Our setup largely follows FlowGRPO, adopting the same number of groups per epoch (48), group size (24), LoRA configuration ($\alpha=64, r=32$), and learning rate ($3e-4$). For each collected clean image, forward noising and loss computation are performed exactly on the corresponding sampling timesteps. We employ a 2nd-order ODE sampler for data collection and enable adaptive time weighting by default.

\textbf{Single-Reward.} For a head-to-head comparison with FlowGRPO under single-reward settings, we fix the number of sampling steps to 10 to ensure fairness. By default, we set $\beta=1$ and $\eta_i=\min(0.001i,0.5)$, which work stably for most reward models. In the case of OCR, the reward rapidly approaches 1 within 100 iterations but suffers from instability. To address this, we adopt a more conservative soft-update strategy with $\eta_{\max}=0.999$.

\textbf{Multi-Reward.} To comprehensively improve the base model across multiple rewards, we adopt a multi-stage training scheme. The training setup involves three categories of rewards and datasets: (1) PickScore, CLIPScore, and HPSv2.1 rewards on the Pick-a-Pic dataset; (2) GenEval reward with the three rewards above on the GenEval dataset; and (3) OCR reward with the three rewards above on the OCR dataset. Since the initial CFG-free generation is of low quality, we first train on (1) for 800 iterations to enhance image quality, followed by (2) for 300 iterations, (1) for 200 iterations, (2) for 200 iterations, and finally (3) for 100 iterations. All rewards are equally weighted, with PickScore divided by 26 for normalization to $[0,1]$. By default, we use $\beta=0.1$ and $\eta_i=\min(0.001i,0.5)$, while setting $\eta_{\max}=0.95$ for OCR to stabilize training. The number of sampling steps is fixed to 40 to ensure high-fidelity data collection.

\section{Additional Results}
\begin{table*}[ht]
    \centering
    \renewcommand{\arraystretch}{1.1}
    \caption{Evaluation results of FlowGRPO and DiffusionNFT trained on single rewards, both initialized from CFG-free base model. \hl{Gray-colored}: In-domain reward. We observe that training exclusively on the OCR reward impairs generalization to other metrics; to compensate this, we enable CFG when evaluating non-OCR rewards for OCR-trained models.}
    \resizebox{\linewidth}{!}{
        \begin{tabular}{lccccccccc}
            \toprule
            \multirow{2}{*}{\textbf{Model}} & \multirow{2}{*}{\textbf{\#Iter}} & \multicolumn{2}{c}{\textbf{Rule-Based}} & \multicolumn{6}{c}{\textbf{Model-Based}} \\ 
            \cmidrule(lr){3-4} \cmidrule(r){5-10} 
            & & \textbf{GenEval} & \textbf{OCR}  & \textbf{PickScore} & \textbf{ClipScore} & \textbf{HPSv2.1} &\textbf{Aesthetic} & \textbf{ImgRwd} & \textbf{UniRwd} \\ 
            \midrule
            SD3.5-M (w/o CFG) & — & 0.24 & 0.12 & 20.51 & 0.237 & 0.204 & 5.13 & -0.58 & 2.02 \\
            + CFG & — & 0.63 & 0.59 & 22.34 & 0.285 & 0.279 & 5.36 & 0.85 & 3.03 \\

            + FlowGRPO & 4k & \cellcolor{llgray}0.97 & 0.30 & 21.78 & 0.277 & 0.248 & 5.15 & 0.74 & 2.87 \\
             & 1k & 0.66 & \cellcolor{llgray}0.96 & 21.94 & 0.280 & 0.257 & 5.18 & 0.31 & 2.86 \\
             & 4k & 0.54 & 0.60 & \cellcolor{llgray}23.62 & 0.257 & 0.295 & 6.42 & 1.17 & 3.17 \\
            + Ours & 1k & \cellcolor{llgray}0.98 & 0.36 & 21.92 & 0.271 & 0.251 & 5.33 & 0.68 & 2.91 \\
             & 150 & 0.54 & \cellcolor{llgray}0.97 & 21.63 & 0.281 & 0.246 & 5.19 & 0.37 & 2.81 \\
             & 2k & 0.53 & 0.64 & \cellcolor{llgray}24.03 & 0.270 & 0.315 & 6.17 & 1.29 & 3.40 \\
            \bottomrule
            \end{tabular}
}
\vspace{-2mm}
\label{tab:head-to-head}
\end{table*}

We provide more qualitative comparison between the base model, FlowGRPO and our multi-reward optimized model in Figure~\ref{fig:qualitative1}, Figure~\ref{fig:qualitative2} and Figure~\ref{fig:qualitative3}.

\begin{figure}[ht]
\centering	\includegraphics[width=1.0\linewidth]{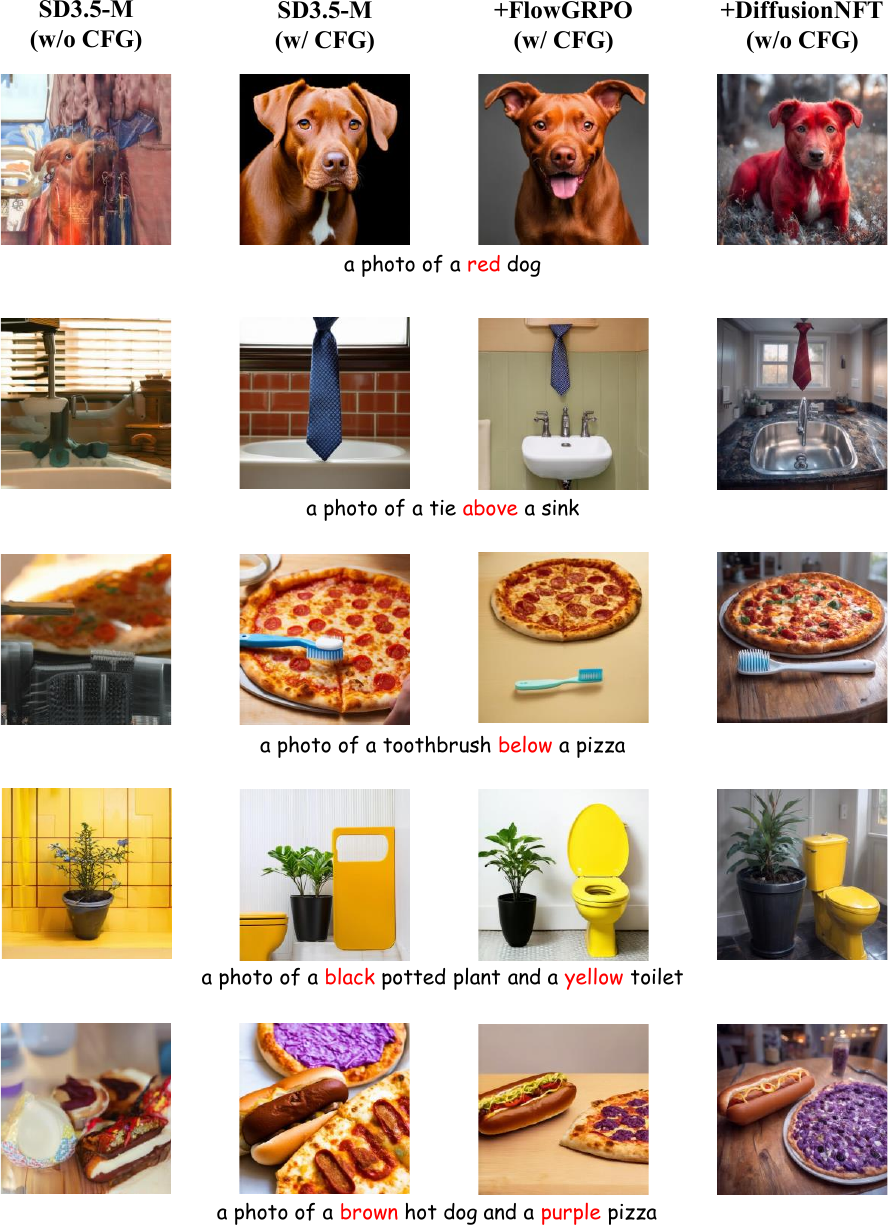}\\
   \vspace{-.05in}
	\caption{\label{fig:qualitative1} Qualitative comparison between FlowGRPO and our model on GenEval prompts.}
	\vspace{-.10in}
\end{figure}

\begin{figure}[ht]
\centering	\includegraphics[width=1.0\linewidth]{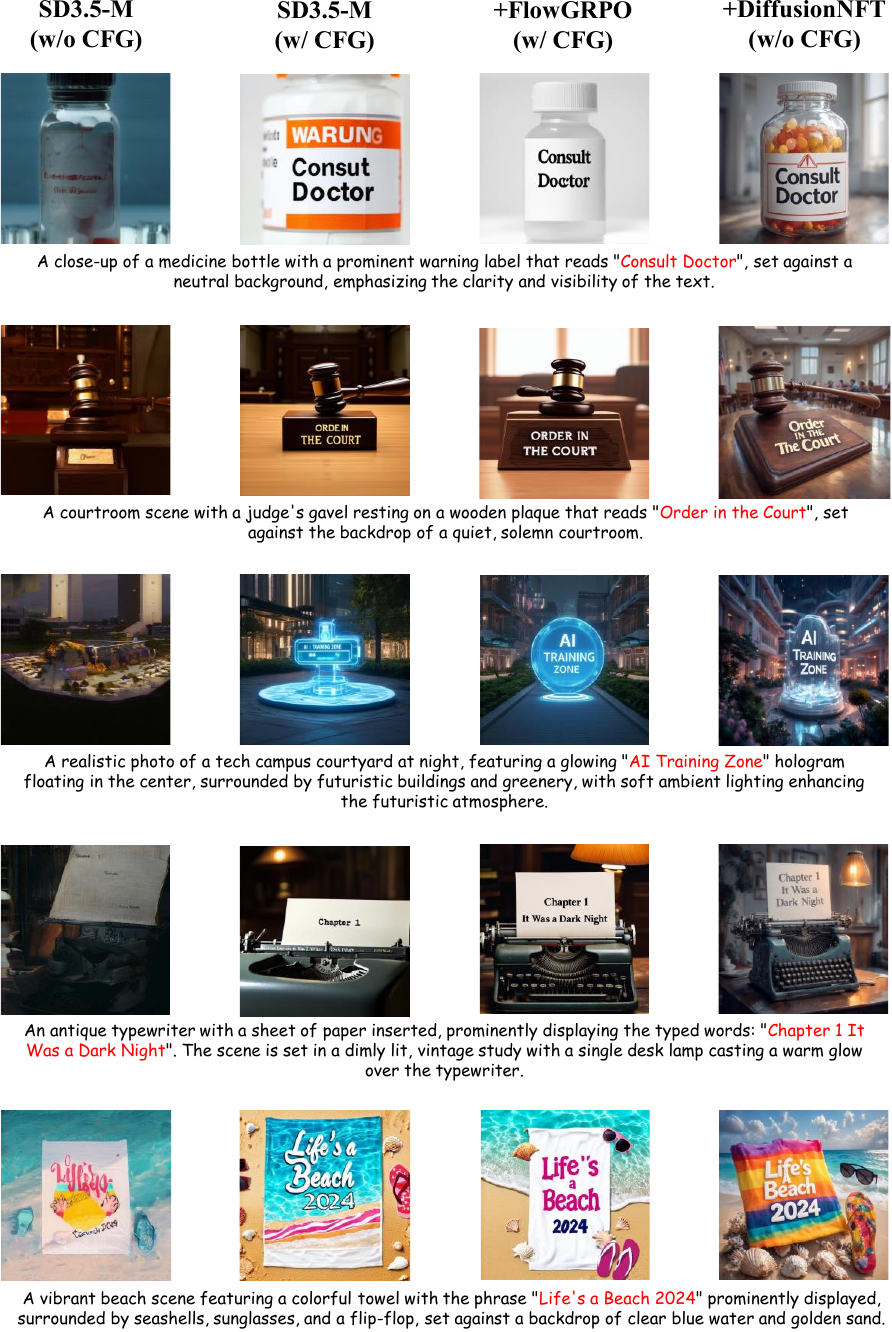}\\
   \vspace{-.05in}
	\caption{\label{fig:qualitative2} Qualitative comparison between FlowGRPO and our model on OCR prompts.}
	\vspace{-.10in}
\end{figure}

\begin{figure}[ht]
\centering	\includegraphics[width=1.0\linewidth]{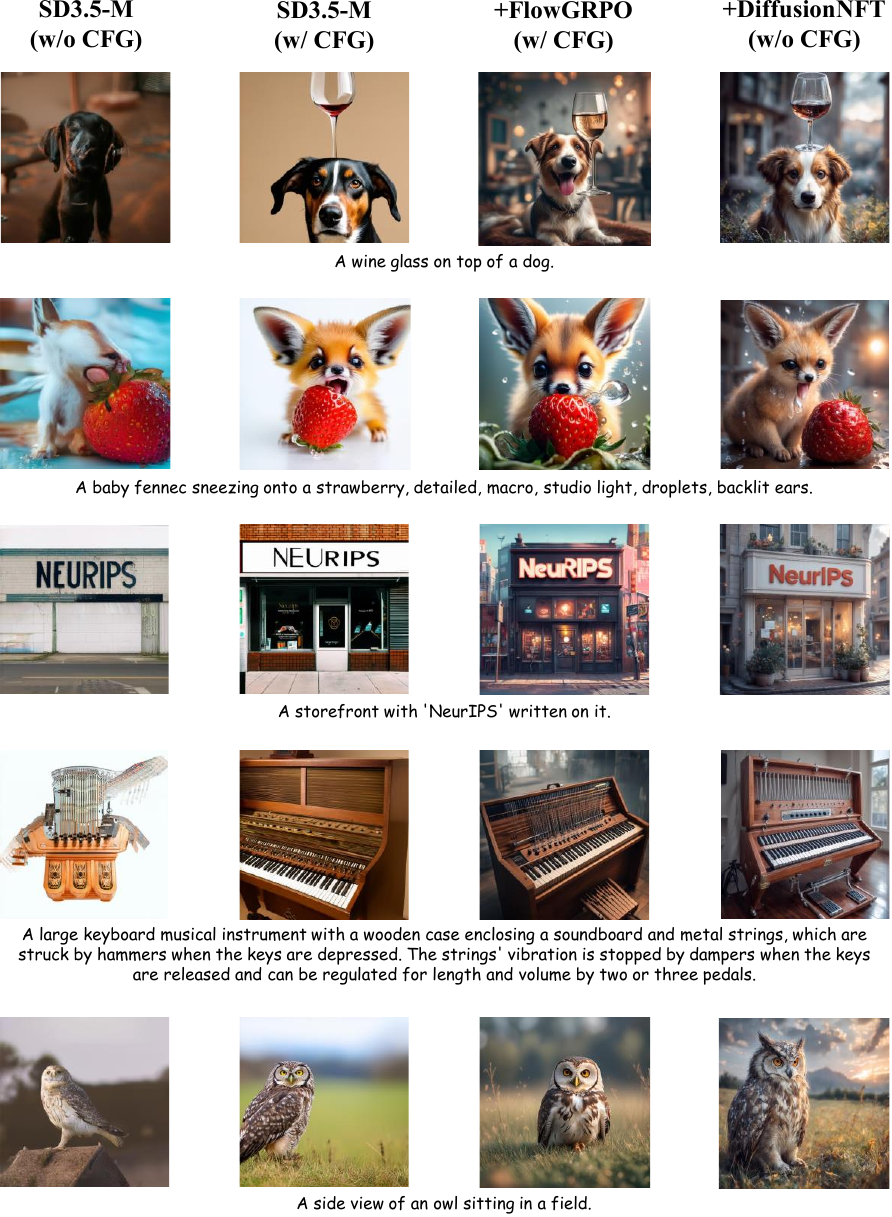}\\
   \vspace{-.05in}
	\caption{\label{fig:qualitative3} Qualitative comparison between FlowGRPO and our model on DrawBench prompts.}
	\vspace{-.10in}
\end{figure}

\end{document}